\let\top\intercal
\DeclareMathOperator*{\pr}{\mathbb P}
\DeclareMathOperator*{\ex}{\mathbb E}
\DeclareMathOperator{\KL}{KL}
\DeclareMathOperator{\argmin}{arg\,min}
\let\loss\ell
\DeclareMathOperator{\lossfn}{loss} %
\DeclareMathOperator{\sign}{sign} %
\newcommand{\reals}{\mathbb R}
\newcommand{\nats}{\mathbb N}
\newcommand{\U}{\mathcal U}
\newcommand{\ind}{\mathbf 1}      %
\newcommand{\ip}[2]{\langle #1, #2 \rangle}  %
\newcommand{\normal}{\mathcal{N}}   %
\let\Pr\pr %
\newcommand{\stochleq}{\ensuremath{\leqclosed}}
\DeclareBoldMathCommand{\a}{a}
\newtheorem{theorem}{Theorem}
\newtheorem{lemma}[theorem]{Lemma}
\theoremstyle{definition}
\newtheorem{definition}[theorem]{Definition}
\newtheorem{condition}[theorem]{Condition}
\newtheorem{example}{Example}
\title{Combining Adversarial Guarantees and\\Stochastic Fast Rates in Online Learning}
\author{
  Wouter M. Koolen\\
  Centrum Wiskunde \& Informatica\\
  \texttt{wmkoolen@cwi.nl}
\and
  Peter Gr\"unwald\\
Centrum Wiskunde \& Informatica\\
and Leiden University\\
\texttt{pdg@cwi.nl} 
\and
  Tim van Erven\\
  Leiden University\\
  \texttt{tim@timvanerven.nl}
}
\DeclareRobustCommand{\VAN}[3]{#2} %
\begin{document}

\maketitle

\begin{abstract}
We consider online learning algorithms that guarantee worst-case regret rates in
adversarial environments (so they can be deployed safely and
will perform robustly), yet adapt optimally to favorable stochastic
environments (so they will perform well in a variety of settings of practical importance).
We quantify the friendliness of stochastic
environments by means of the well-known Bernstein (a.k.a.\ generalized
Tsybakov margin)
condition. For two  recent algorithms (Squint for the
Hedge setting and MetaGrad for online convex
optimization) we show that the particular form of their
data-dependent individual-sequence regret guarantees implies that they
adapt automatically to the Bernstein parameters of the stochastic
environment. We prove that these algorithms attain fast rates in their
respective settings both in expectation and with high probability.
\end{abstract}

\section{Introduction}
We consider online sequential decision problems. We focus on full
information settings, encompassing such interaction protocols as online
prediction, classification and regression, prediction with expert advice
or the Hedge setting, and online convex optimization (see
\citealt{CesaBianchiLugosi2006}). The goal of the learner is to choose a
sequence of actions with small regret, i.e.\ such that his cumulative
loss is not much larger than the loss of the best fixed action in
hindsight. This has to hold even in the worst case, where the
environment is controlled by an adversary. After three decades of
research there exist many algorithms and analysis techniques for a
variety of such settings. For many settings, adversarial regret lower
bounds of order $\sqrt{T}$ are known, along with matching individual
sequence algorithms \citep{ShalevShwartz2011}.

A more recent line of development is to design adaptive algorithms with
regret guarantees that scale with some more refined measure of the
complexity of the problem. For the Hedge
setting, results of this type have been obtained, amongst others, by
\citet{CesaBianchiMansourStoltz2007, adahedge,
GaillardStoltzVanErven2014, abprod, llr, squint, adanormalhedge, boa}.
Interestingly, the price for such adaptivity (i.e.\ the
worsening of the worst-case regret bound) is typically extremely small
(i.e.\ a constant factor in the regret bound). For \emph{online convex
optimization} (OCO), many different types of adaptivity have been
explored, including by
\citep{CrammerEtAl2009AROW,adagrad,McMahanStreeter2010,hazan2010extracting,GradualVariationInCosts2012,SteinhardtLiang14,metagrad}.

Here we are interested in the question of whether such adaptive results
are strong enough to lead to improved rates in the stochastic case when
the data follow a ``friendly'' distribution.
In specific cases it has been shown that fancy guarantees do imply
significantly reduced regret. For example
\cite{GaillardStoltzVanErven2014} present a generic argument showing
that a certain kind of second-order regret guarantees implies constant
expected regret (the fastest possible rate) for i.i.d.\ losses drawn
from a distribution with a gap (between expected loss of the best and
all other actions).
In this paper we significantly extend this result. We show that a
variety of individual-sequence second-order regret guarantees imply
fast regret rates for distributions under much milder stochastic
assumptions. In particular, we will look at the Bernstein
condition (see \citealt{bartlett2006empirical}), which is the key to fast rates in
the batch setting.
This condition provides a
parametrised interpolation (expressed in terms of the Bernstein exponent $\kappa \in [0,1]$) between the friendly gap case $(\kappa = 1)$ and the stochastic
worst case $(\kappa = 0$). 
We show that appropriate second-order guarantees
automatically lead to adaptation to these parameters, for both the Hedge
setting and for OCO. In the Hedge setting, we build on the guarantees
available for the Squint algorithm 
\citep{squint} and for OCO we rely on guarantees achieved by
MetaGrad \citep{metagrad} to obtain regret rates of order
$T^{\frac{1-\kappa}{2-\kappa}}$ (Theorem~\ref{thm:main}). We show this, not just in expectation,
but also with high probability.
Our proofs use that, for bounded losses, the
Bernstein condition is equivalent to the so-called 
Central
condition \citep{EGMRW15},
which
provides control over a martingale-type quantity that captures the
second-order part of the bounds (Lemma~\ref{lem:squeezer}). The rates we
obtain include the slow worst-case $\sqrt{T}$ regime for $\kappa = 0$
and the fastest (doubly) logarithmic regime for $\kappa = 1$.

The next section introduces the two settings we consider and the
individual sequence guarantees we will use in each. It also reviews the
stochastic criteria for fast rates and presents our main result. In Section~\ref{sec:examples} we consider a variety of examples illustrating the breadth of cases that we cover. In Section~\ref{sec:results} we prove that second-order guarantees imply adaptation to Bernstein conditions.

\section{Setup}

\subsection{Hedge Setting}\label{sec:hedge}
We start with arguably the simplest setting of online prediction, the Hedge setting popularized by \cite{{FreundSchapire1997}}. To be able to illustrate the full reach of our stochastic assumption we will use a minor extension to countably infinitely many actions $k \in \nats = \set{1,2,\ldots}$, customarily called experts. The protocol is as follows. Each round $t$ the learner plays a probability mass function $w_t = (w_t^1, w_t^2, \ldots)$ on experts. Then the environment reveals the losses $\loss_t = (\loss_t^1, \loss_t^2, \ldots)$ of the experts, where each $\loss_t^k \in [0,1]$. The learner incurs loss $\tuple{w_t, \loss_t} = \sum_k w_t^k \loss_t^k$. The regret after $T$ rounds compared to expert $k$ is given by
\[
R_T^k ~\df~ \sum_{t=1}^T \del*{\tuple{w_t, \loss_t} - \loss_t^k}
.
\]
The goal of the learner is to keep the regret small compared to any expert $k$. We will make use of \emph{Squint} by \citet{squint}, a self-tuning algorithm for playing $w_t$. \citet[Theorem~4]{squint} show that Squint with prior probability mass function $\pi = (\pi^1, \pi^2, \ldots)$ guarantees
\begin{equation}\label{eq:squint.bd}
R_T^k
~\le~
\sqrt{V_T^k K_T^k} + K_T^k
\quad
\text{where}
\quad
K_T^k = O(- \ln \pi^k + \ln \ln T)
\qquad
\text{for any expert $k$.}
\end{equation}
Here $V_T^k \df \sum_{t=1}^T \del*{\tuple{w_t, \loss_t} - \loss_t^k}^2$
is a second-order term that depends on the algorithm's own predictions
$w_t$.
It is well-known that with $K$ experts the worst-case lower bound is
$\Theta(\sqrt{T \ln K})$ \cite[Theorem~3.7]{CesaBianchiLugosi2006}. Taking a fat-tailed prior, for example $\pi^k = \frac{1}{k(k+1)}$, and using $V_T^k \le T$, the above bound implies $R_T^k ~\le~ O \del*{\sqrt{T (\ln k + \ln \ln T)}}$, matching the lower bound in some sense for all $k$ simultaneously.

The question we study in this paper is what becomes of the regret when the sequence of losses $\loss_1, \loss_2, \ldots$ is drawn from some distribution $\pr$, not necessarily i.i.d. But before we expand on such stochastic cases, let us first introduce another setting.

\subsection{Online Convex Optimization (OCO)}\label{sec:oco}
We now turn to our second setting called \emph{online convex
optimization} \citep{ShalevShwartz2011}. Here the set of actions is a compact convex set $\U \subseteq \reals^d$. Each round $t$ the learner plays a point $w_t \in \U$. Then the environment reveals a convex loss function $\loss_t : \U \to \reals$. The loss of the learner is $\loss_t(w_t)$. The regret after $T$ rounds compared to $u \in \U$ is given by
\[
R_T^u ~\df~ \sum_{t=1}^T \del*{\loss_t(w_t) - \loss_t(u)}
.
\]
The goal is small regret compared to any point $u \in \U$. A common tool in the analysis of algorithms is the linear upper bound on the regret obtained from convexity of $\loss_t$ (at non-differentiable points we may take any sub-gradient)
\[
R_T^u
~\le~
\tilde R_T^u
~\df~
\sum_{t=1}^T \tuple{w_t - u, \nabla \loss_t(w_t)}
.
\]
We will make use of (the full matrix version of) \emph{MetaGrad} by \citet{metagrad}. In their
Theorem~8, they show
that, simultaneously, $\tilde R_T^u \le O\del*{D G \sqrt{T}}$ and
\begin{equation}\label{eq:metagrad.bd}
\tilde R_T^u
~\le~
\sqrt{V_T^u K_T} + D G K_T
\quad
\text{where}
\quad
K_T ~=~ O(d \ln T)
\qquad
\text{for any $u \in \U$}
,
\end{equation}
where $D$ bounds the two-norm diameter of $\U$, $G$ bounds $\norm{\nabla \loss_t(w_t)}_2$ the two-norm of the gradients and $V_T^u \df \sum_{t=1}^T \tuple{w_t - u, \nabla \loss_t(w_t)}^2$. The first bound matches the worst-case lower bound. The second bound \eqref{eq:metagrad.bd} may be a factor $\sqrt{K_T}$ worse, as $V_T^u \le G^2 D^2 T$ by Cauchy-Schwarz. Yet in this paper we will show fast rates in certain stochastic settings arising from \eqref{eq:metagrad.bd}. To simplify notation we will assume from now on that $D G = 1$ (this can always be achieved by scaling the loss).

To talk about stochastic settings we will assume that the sequence $\loss_t$ of loss functions (and hence the gradients $\nabla \loss_t(w_t)$) are drawn from a distribution $\pr$, not necessarily i.i.d. This includes the common case of linear regression and classification where $\loss_t(u) = \lossfn(\tuple{u,x_t}, y_t)$ with $(x_t,y_t)$ sampled i.i.d.\ from some distribution and $\lossfn$ a fixed one-dimensional convex loss function (e.g.\ square loss, absolute loss, log loss, hinge loss, \dots).

\subsection{Parameterized Family of Stochastic Assumptions}\label{sec:bern.and.cen}

We now recall the Bernstein \citep{bartlett2006empirical} and Central \citep{EGMRW15} stochastic conditions. In both cases the idea behind the assumption is to control the variance of the excess loss of the actions in the neighborhood of the best action. 

We do not require that the losses are i.i.d., nor that the Bayes act is
in the model. For the Hedge setting it suffices if there is a fixed
expert $k^*$ that is always best, i.e.\ $\ex
\sbrc*{\loss_t^{k^*}}{\mathcal G_{t-1}} = \inf_k \ex
\sbrc*{\loss_t^{k}}{\mathcal G_{t-1}}$ almost surely for all $t$. (Here
we denote by $\mathcal G_{t-1}$ the sigma algebra generated by $\loss_1,
\ldots, \loss_{t-1}$, and the \emph{almost surely} quantification refers to
the distribution of $\loss_1,
\ldots, \loss_{t-1}$.)
Similarly, for OCO we assume there is a fixed point $u^* \in \U$ attaining $\min_{u \in \U} \ex \sbrc*{\loss_t(u)}{\mathcal G_{t-1}}$ at every round $t$. In either case there may be multiple candidate $k^*$ or $u^*$. In the succeeding we assume that one is selected. Note that for i.i.d.\ losses the existence of a minimiser is not such a strong assumption (it is even automatic in the OCO case due to compactness of $\U$), while it is very strong beyond i.i.d. Yet it is not impossible (and actually interesting) as we will show by example in Section~\ref{sec:examples}.

Based on this loss minimiser, we define the \emph{excess losses}, a family of random variables indexed by time $t \in \nats$ and expert/point $k \in \nats$/$u\in \U$ as follows
\begin{equation}\label{eq:excess.loss}
x_t^k ~\df~ \loss_t^k - \loss_t^{k^*} 
\quad \text{(Hedge)}
\qquad
\text{and}
\qquad
x_t^u ~\df~ \tuple{u - u^*, \nabla \loss_t(u)}
\quad \text{(OCO)}
.
\end{equation}
Note that for the Hedge setting we work with the loss directly. For OCO instead we talk about the linear upper bound on the loss, for this is the quantity that needs to be controlled to make use of the MetaGrad bound \eqref{eq:metagrad.bd}. With these variables in place, from this point on the story is the same for Hedge and for OCO. So let us write $\mathcal F$ for either the set $\nats$ of experts or the set $\U$ of points, and $f^*$ for $k^*$ resp. $u^*$, and let us consider the family $\setc{x_t^f}{f \in \mathcal F, t \in \nats}$. We call $f \in \mathcal F$ {\em predictors}. 
 The point of
these stochastic conditions is that they imply that the
variance in the excess loss gets smaller the closer a predictor gets to
the optimum in terms of expected excess loss. This is most directly seen
in the Bernstein condition:
\begin{condition}\label{cond:Bern}
Fix $B \ge 0$ and $\kappa \in [0,1]$. The family \eqref{eq:excess.loss} satisfies the \emph{$(B, \kappa)$-Bernstein condition} if
\[
\ex \sbrc*{(x_t^f)^2}{\mathcal G_{t-1}}
~\le~
B \ex \sbrc*{x_t^f}{\mathcal G_{t-1}}^\kappa
\quad
\text{almost surely for all $f \in \mathcal F$ and rounds $t \in \nats$.}
\]
\end{condition}
Some authors refer to the $\kappa = 1$ case as the \emph{Massart condition}.
As shown by \citet[Theorem~5.4]{EGMRW15}, for bounded excess losses
(which we assume throughout), the Bernstein condition is equivalent to
the following condition:
\begin{condition}\label{con:Cen}
Fix a function $\epsilon : \reals_+ \to \reals_+$. The family \eqref{eq:excess.loss} satisfies the \emph{$\epsilon$-central condition} if
\[
\frac{1}{\eta}
\ln
\ex \sbrc*{e^{- \eta x_t^f}}{\mathcal G_{t-1}}
~\le~
\epsilon(\eta)
\qquad
\text{almost surely for all $f \in \mathcal F$, $\eta \ge 0$ and $t \in \nats$}
.
\]
\end{condition}
\citeauthor{EGMRW15} explicitly convert back and forth between the
parameters of the Bernstein and Central Condition. In the remainder we
will use that Bernstein implies Central (with $\epsilon(\eta) = O( (B
\eta)^{\frac{1}{1-\kappa}})$). For completeness we include a proof in Appendix~\ref{appx:b2c}.

\subsection{Main Result}

In the stochastic case we evaluate the performance of algorithms by
$R_T^{f^*}$, i.e.\ the regret compared to the predictor $f^*$ with
minimal expected loss. The expectation $\ex[R_T^{f^*}]$ is sometimes
called the \emph{pseudo-regret}. The following result shows that
second-order methods automatically adapt to the Bernstein condition.
(Proof sketch in Section~\ref{sec:results}.)

\begin{theorem}\label{thm:main}
In any stochastic setting satisfying the $(B,\kappa$)-Bernstein Condition~\ref{cond:Bern}, the guarantees \eqref{eq:squint.bd} for Squint and \eqref{eq:metagrad.bd} for MetaGrad imply fast rates for the respective algorithms both in expectation and with high probability. That is,
\begin{align*}
\ex[R_T^{f^*}] 
&~=~ 
O\del*{K_T^{\frac{1}{2-\kappa}}T^{\frac{1-\kappa}{2-\kappa}}}
,
\intertext{and for any $\delta > 0$, with probability at least $1-\delta$,}
R_T^{f^*} 
&~=~
O\del*{(K_T - \ln
\delta)^{\frac{1}{2-\kappa}}T^{\frac{1-\kappa}{2-\kappa}}},
\end{align*}
where for Squint $K_T \df K_T^{f^*}$ from~\eqref{eq:squint.bd} and for
MetaGrad $K_T$ is as in~\eqref{eq:metagrad.bd}. 
\end{theorem}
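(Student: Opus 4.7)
The plan is to leverage the Bernstein condition to show that the second-order quantity $V_T^{f^*}$ appearing in both \eqref{eq:squint.bd} and \eqref{eq:metagrad.bd} is itself controlled by $R_T^{f^*}$ (together with $T$ and $\kappa$). Substituting this control back into the algorithmic bound then produces a self-referential inequality of the form $R \le \sqrt{A R^\kappa} + c$ with $A = \Theta(B K_T T^{1-\kappa})$ and $c = \Theta(K_T)$. A routine case analysis (depending on which of the two right-hand terms dominates) resolves this to $R = O\del*{c + A^{1/(2-\kappa)}}$, matching the stated rate $O\del*{K_T^{1/(2-\kappa)} T^{(1-\kappa)/(2-\kappa)}}$ after the $K_T$ term is absorbed.

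For the in-expectation claim, let $r_t$ denote the algorithm's instantaneous excess loss, equal to $\tuple{w_t, \loss_t} - \loss_t^{k^*}$ in the Hedge setting and $\tuple{w_t - u^*, \nabla \loss_t(w_t)}$ in OCO, so that $V_T^{f^*} = \sum_t r_t^2$. In Hedge I would write $r_t = \sum_k w_t^k x_t^k$ and use convexity of squaring to bound $r_t^2 \le \sum_k w_t^k (x_t^k)^2$; in OCO the observation $r_t = x_t^{w_t}$ with $w_t$ being $\mathcal G_{t-1}$-measurable suffices. Applying Condition~\ref{cond:Bern} termwise, followed by the concave Jensen inequality $\sum_k w_t^k y_k^\kappa \le \del*{\sum_k w_t^k y_k}^\kappa$, yields the unified conditional bound $\ex \sbrc*{r_t^2}{\mathcal G_{t-1}} \le B\, \ex \sbrc*{r_t}{\mathcal G_{t-1}}^\kappa$. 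Summing over $t$, applying H\"older's inequality to extract a $T^{1-\kappa}$ factor, taking outer expectation, and one final Jensen step on $y^\kappa$ then give $\ex[V_T^{f^*}] \le B\, T^{1-\kappa} \ex[R_T^{f^*}]^\kappa$. Plugging this into the second-order regret bound and using concavity of $\sqrt{\cdot}$ delivers the self-referential inequality for $\ex[R_T^{f^*}]$.

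For the high-probability version, my plan is to exploit the equivalence, for bounded excess losses, between Condition~\ref{cond:Bern} and the $\epsilon$-Central Condition~\ref{con:Cen}, since the latter recasts variance control as a multiplicative exponential moment statement $\ex \sbrc*{e^{-\eta r_t}}{\mathcal G_{t-1}} \le e^{\eta \epsilon(\eta)}$ (which transfers from experts to the algorithm's mixture $r_t$ by Jensen applied to the convex exponential). The squeezer Lemma~\ref{lem:squeezer}, advertised in the introduction, packages this into a supermartingale argument that yields, with probability at least $1-\delta$, an upper bound on $V_T^{f^*}$ of the same shape as in expectation, namely $V_T^{f^*} \le O\del*{B T^{1-\kappa} (R_T^{f^*})^\kappa} + O(-\ln \delta)$. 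Substituting into the deterministic algorithmic bound and solving the resulting self-referential inequality, now with $c$ effectively replaced by $K_T - \ln \delta$, produces the claimed $O\del*{(K_T - \ln \delta)^{1/(2-\kappa)} T^{(1-\kappa)/(2-\kappa)}}$ rate.

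The main obstacle will be this high-probability step. The most naive supermartingale extracted from Central, $\exp\del*{-\eta \sum_t r_t - T \eta \epsilon(\eta)}$, bounds $R_T^{f^*}$ from \emph{below} rather than above, so extracting an upper tail on the regret requires a more delicate construction (the squeezer lemma) that concentrates the second-order term $V_T^{f^*}$ rather than the mean itself. A further subtlety is that the weights $w_t$ depend on past losses; fortunately Conditions~\ref{cond:Bern} and~\ref{con:Cen} are stated conditionally on $\mathcal G_{t-1}$, which is exactly the form needed to handle this dependence cleanly.
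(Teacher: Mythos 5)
Your in-expectation argument is a genuinely different and more elementary route than the paper's. The paper never manipulates the Bernstein condition directly; it passes to the Central condition, proves a unified exponential-stochastic-inequality (ESI) statement $R_T^{f^*} \stochleq_\gamma \frac{K_T}{c\gamma} + T\epsilon(2\gamma)(1+c\gamma^2) + 2K_T$ (Theorem~\ref{thm:luckiness.regret.bd}), plugs in $\epsilon(2\gamma) \le O\bigl((2\gamma B)^{1/(1-\kappa)}\bigr)$, and tunes $\gamma$; the expectation and high-probability claims then drop out simultaneously from the two faces of the ESI (Lemma~\ref{lem:ESI}). Your chain (convexity of squaring, termwise Bernstein, concave Jensen on $y^\kappa$, H\"older to pull out $T^{1-\kappa}$, a tower-property plus outer Jensen, then Jensen on $\sqrt{\cdot}$ and a two-case solve of $R \le \sqrt{AR^\kappa}+c$) is correct and avoids Lemma~\ref{lem:squeezer} entirely for the in-expectation statement. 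What it costs you is exactly what the paper's ESI framework buys: the elementary Jensen/H\"older route lives entirely in expectation and cannot be lifted to concentration, so you are forced into a separate argument for the tail bound instead of getting both claims from one derivation.

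For the high-probability part there is a concrete gap in your description of what Lemma~\ref{lem:squeezer} produces. After the chain rule and Markov, the squeezer gives, with probability $1-\delta$, the \emph{linear} inequality $c\gamma V_T^{f^*} - R_T^{f^*} \le T\epsilon(2\gamma)(1+c\gamma^2) + \gamma^{-1}(-\ln\delta)$ for each fixed $\gamma$; it does not directly yield $V_T^{f^*} \le O\bigl(BT^{1-\kappa}(R_T^{f^*})^\kappa\bigr) + O(-\ln\delta)$ as you claim. The $\kappa$-dependence is not baked into the squeezer bound itself --- it enters only when you substitute the Bernstein-to-Central conversion $\epsilon(2\gamma) \le O\bigl((2\gamma B)^{1/(1-\kappa)}\bigr)$ and then tune the free parameter $\gamma$ (with the tuning itself depending on $\delta$ via replacing $K_T$ by $K_T - \tfrac12\ln\delta$). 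As written, your outline presupposes a ``power-law'' $V$-vs-$R$ bound that the supermartingale does not hand you; you would need to reorganize the high-probability argument to combine the $\gamma$-indexed linear bound with the algorithmic inequality $R_T^{f^*} \le \sqrt{V_T^{f^*}K_T}+K_T$ (also rewritten as $\inf_\eta \frac{\eta}{2}V_T^{f^*}+\frac{K_T}{2\eta}+K_T$, choosing $\eta = c\gamma$), and only at the very end substitute and optimize $\gamma$. This is precisely what the paper's proof of Theorem~\ref{thm:luckiness.regret.bd} does, so the intended lemma is the right tool; the issue is the shape of the intermediate bound you attribute to it.
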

We see that Squint and MetaGrad (and any other second-order methods
achieving the same bounds, as our results only use these
bounds and do not depend on the details of the algorithms)
adapt automatically to the Bernstein parameters of the
distribution, without any tuning. Appendix~\ref{app:cont} provides an
extension of Theorem~\ref{thm:main} that allows using Squint with uncountable ${\cal F}$.

Crucially, the bound provided by Theorem~\ref{thm:main} is natural,
and, in general, the best one can expect. This can be seen from
considering the {\em statistical learning setting}, which is a special
case of our setup. Here $(x_t,y_t)$ are i.i.d.\ $\sim {\mathbb P}$ and
${\cal F}$ is a set of functions from ${\cal X}$ to a set of
predictions ${\cal A}$, with $\ell^f_t := \ell(y_t,f(x_t))$ for some
loss function $\ell: {\cal Y} \times {\cal A} \rightarrow [0,1]$ such
as squared, $0/1$, or absolute loss.  In this setting one usually
considers excess risk, which is the expected loss difference
between the learned $\hat{f}$ and the optimal $f^*$.  The minimax
expected (over training sample $(x^t,y^t)$) risk relative to $f^*$ is
of order ${T}^{-1/2}$ (see e.g.
\cite{massart2006risk,audibert2009fast}).  To get better risk rates,
one has to impose further assumptions on ${\mathbb P}$.  A standard
assumption made in such cases is a Bernstein condition with exponent
$\kappa > 0$; see
e.g. \cite{koltchinskii2006local,bartlett2006empirical}, or
\cite{Audibert04} or \cite{audibert2009fast}; see \cite{EGMRW15} for how it generalizes the Tsybakov margin and other conditions. 

If ${\cal F}$ is sufficiently `simple', e.g. a class with logarithmic
entropy numbers (see Appendix~\ref{app:cont}), or, in classification,
a VC class, then, if a $\kappa$-Bernstein condition holds, ERM
(empirical risk minimization) achieves, in expectation, a better
excess risk bound of order
$O\del*{(\log T) \cdot T^{-\frac{1}{2-\kappa}}}$.  The bound
interpolates between $T^{-1/2}$ for $\kappa = 0$ and $T^{-1}$ for
$\kappa =1$ (Massart condition). Results of
\cite{Tsybakov04,massart2006risk,audibert2009fast} suggest that this
rate can, in general, not be improved upon, and exactly this rate is
achieved by ERM and various other algorithms in various settings by e.g. 
\cite{Tsybakov04,Audibert04,audibert2009fast,bartlett2006convexity}.
By summing from $t=1$ to $T$ and using ERM at each $t$ to classify the
next data point (so that ERM becomes FTL, follow-the-leader), this suggests
that we can achieve a cumulative expected regret $\ex [R^{f^*}_T]$ of
order $O\del*{(\log T) \cdot
  T^{\frac{1-\kappa}{2-\kappa}}}$. Theorem~\ref{thm:main} shows that
this is, indeed, also the rate that Squint attains in such cases if
${\cal F}$ is countable and the optimal $f^*$ has positive prior mass
$\pi^{f^*}$ (more on this condition below)--- we thus see that Squint
obtains exactly the rates one would expect from a statistical
learning/classification perspective, and the minimax excess risk
results in that setting suggests that these cumulative regret rates
cannot be improved in general. It was shown earlier by
\cite{Audibert04} that, when equipped with an oracle to tune the
learning rate $\eta$ as a function of $t$, the rates
$O\del*{(\log T) \cdot T^{\frac{1-\kappa}{2-\kappa}}}$ can also be
achieved by Hedge, but the exact tuning depends on the unknown
$\kappa$. \cite{Grunwald12} provides a means to tune $\eta$
automatically in terms of the data, but his method --- like ERM and
all algorithms in the references above --- may achieve {\em linear\/}
regret in worst-case settings, whereas Squint keeps the $O(\sqrt{T})$
guarantee for such cases.

Theorem~\ref{thm:main} only gives the desired rate for Squint if
${\cal F}$ is countable and $\pi^{f^*} > 0$. The combination of these
two assumptions is strong or at least unnatural, and OCO cannot be
readily used in all such cases either, so in Appendix~\ref{app:cont}
we therefore show how to extend Theorem~\ref{thm:main} to the case of
infinite ${\cal F}$, which can be continuous and thus have
$\pi^{f^*} = 0$, as long as ${\cal F}$ admits sufficiently small
entropy numbers. Incidentally, this also allows us to show that Squint
achieves regret rate
$O\del*{(\log T) \cdot T^{\frac{1-\kappa}{2-\kappa}}}$ when
${\cal F} = \bigcup_{i=1,2,\ldots} {\cal F}_i$ is a countably
infinite union of ${\cal F}_i$ with appropriate entropy numbers; in
such cases there can be, at every sample size, a classifier
$\hat{f} \in {\cal F}$ with $0$ empirical error, so that ERM/FTL will
always overfit and cannot be used even if the Bernstein condition
holds; Squint allows for aggregation of such models. In the remainder of
the main text, we concentrate on applications for which
Theorem~\ref{thm:main} can be used directly, without extensions.

\section{Examples}\label{sec:examples}
We give examples motivating and illustrating the Bernstein/Central condition for the Hedge and OCO settings. Our examples in the Hedge setting will illustrate Bernstein with $\kappa < 1$ and non i.i.d.\ distributions. Our OCO examples were chosen to be natural and illustrate fast rates without curvature.

\subsection{Hedge Setting: Gap implies Bernstein with $\kappa=1$}
In the Hedge setting, we say that a distribution $\pr$ (not necessarily i.i.d.) of expert losses $\setc{\loss_t^k}{t, k \in \nats}$ has \emph{gap} $\alpha > 0$ if there is an expert $k^*$ such that
\[
\ex \sbrc*{\loss_t^{k^*}}{\mathcal G_{t-1}} + \alpha
~\le~
\inf_{k \neq k^*} \ex \sbrc*{\loss_t^k}{\mathcal G_{t-1}}
\qquad
\text{almost surely for each round $ \in \nats$}
.
\]
It is clear that the condition can only hold for $k^*$ the minimiser of the expected loss.
\begin{lemma}
A distribution with gap $\alpha$ is $(\frac{1}{\alpha},1)$-Bernstein.
\end{lemma}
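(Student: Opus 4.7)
The plan is to unpack both conditions and show the Bernstein inequality holds pointwise in $k$ by separating the trivial expert $k^*$ from the rest and using the boundedness of losses to trivially bound the second moment.

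First I would observe that the excess losses $x_t^k = \loss_t^k - \loss_t^{k^*}$ lie in $[-1,1]$ almost surely because the per-round losses lie in $[0,1]$. This immediately gives $(x_t^k)^2 \le 1$, hence $\ex\sbrc*{(x_t^k)^2}{\mathcal G_{t-1}} \le 1$, uniformly in $k$ and $t$.

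Next I would handle the two cases. For $k = k^*$ the excess loss vanishes identically, so both sides of the Bernstein inequality are $0$ and there is nothing to prove. For $k \neq k^*$, the gap assumption gives
\[
\ex\sbrc*{x_t^k}{\mathcal G_{t-1}}
~=~ \ex\sbrc*{\loss_t^k}{\mathcal G_{t-1}} - \ex\sbrc*{\loss_t^{k^*}}{\mathcal G_{t-1}}
~\ge~ \alpha
\]
almost surely. Combining these two facts,
\[
\ex\sbrc*{(x_t^k)^2}{\mathcal G_{t-1}}
~\le~ 1
~\le~ \frac{1}{\alpha}\cdot \alpha
~\le~ \frac{1}{\alpha}\, \ex\sbrc*{x_t^k}{\mathcal G_{t-1}},
\]
which is exactly the $(1/\alpha, 1)$-Bernstein condition with $\kappa = 1$.

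There is no real obstacle here; the argument is a direct check. The only thing to be mindful of is that the gap assumption must uniquely identify $k^*$ as the conditional minimiser (so that the inequality $\ex[x_t^k\mid\mathcal G_{t-1}]\ge\alpha$ is actually strict enough for all $k\neq k^*$), but this is already built into the definition of gap given in the excerpt.
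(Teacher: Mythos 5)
Your proof is correct and takes essentially the same route as the paper: bound the conditional second moment by $1$ via boundedness, then use the gap to lower-bound the conditional first moment by $\alpha$. You are actually slightly more careful than the paper, which writes the chain of inequalities ``for all $k,t$'' even though the final step $\alpha \le \ex\sbrc*{x_t^{k^*}}{\mathcal G_{t-1}} = 0$ fails for $k = k^*$; your explicit case split handles that edge case cleanly.
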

\begin{proof}
For all $k,t$ we have
$
\ex \sbrc*{(x_t^k)^2}{\mathcal G_{t-1}}
\le
1
=
\frac{1}{\alpha} \alpha
\le
\frac{1}{\alpha} \ex \sbrc*{x_t^k}{\mathcal G_{t-1}}
$
.
\end{proof}
By Theorem~\ref{thm:main} we get the $R_T^{k^*} = O(K_T) = O(\ln \ln T)$ rate. \cite{GaillardStoltzVanErven2014} show constant regret for finitely many experts and i.i.d.\ losses with a gap. Our alternative proof above shows that neither finiteness nor i.i.d.\ are essential for fast rates in this case.

\subsection{Hedge Setting: Any $(1,\kappa)$-Bernstein}
The next example illustrates that we can sometimes get the fast rates
without a gap. And it also shows that we can get any intermediate
rate: we construct an example satisfying the Bernstein condition for
any $\kappa \in [0,1]$ of our choosing (such examples occur naturally
in classiciation settings such as those consider in the example in
Appendix~\ref{app:cont}).

Fix $\kappa \in [0,1]$. Each expert $k$ is parametrised by a real
number $\delta_k \in [0,1/2]$. The only assumption we make is that
$\delta_k=0$ for some $k$, and
$\inf_k \setc{\delta_k}{\delta_k>0} = 0$. For a concrete example let
us choose $\delta_1 = 0$ and $\delta_k = 1/k$. Expert $\delta$ has
loss $1/2 \pm \delta$ with probability
$ \frac{1 \pm \delta^{2/\kappa-1}}{2}$ independently between experts
and rounds. Expert $\delta$ has mean loss
$\frac{1}{2} + \delta^{2/\kappa}$, and so $\delta = 0$ is best, with
loss deterministically equal to $1/2$. The squared excess loss of
$\delta$ is $\delta^2$. So we have the Bernstein condition with
exponent $\kappa$ (but no $\kappa' > \kappa$) and constant $1$, and
the associated regret rate by Theorem~\ref{thm:main}.

Note that for $\kappa = 0$ (the hard case) all experts have mean loss equal to $\frac{1}{2}$. So no matter which $k^*$ we designate as the best expert our expected regret is zero. Yet the experts do not agree, as their losses deviate from $\frac{1}{2}$ independently at random. Hence, by the central limit theorem, with high probability our regret is of order $\sqrt{T}$. On the other side of the spectrum, for $\kappa = 1$ (the best case), we do not find a gap. We still have experts arbitrary close to the best expert in mean, but their expected excess loss squared equals their expected excess loss.

ERM/FTL may fail miserably on this type of examples. The clearest case
is when $\setc{k}{\delta_k > \epsilon}$ is infinite for some
$\epsilon > 0$. Then at any $t$ there will be experts that, by chance,
incurred their lower loss every round. Picking any of them will result
in expected instantaneous regret at least $\epsilon^{2/\kappa}$,
leading to linear regret overall.

The requirement $\delta_k = 0$ for some $k$ is essential. If instead $\delta_k > 0$ for all $k$ then there is no best expert in the class. Theorem~\ref{thm:mainb} in Appendix~\ref{app:cont} shows how to deal with this case.

\subsection{Hedge Setting: Markov Chains}

Suppose we model a binary sequence $z_1,z_2,\ldots,z_T$ with $m$-th
order Markov chains. As experts we consider all possible functions $f
\colon \{0,1\}^m \to \{0,1\}$ that map a history of length $m$ to a
prediction for the next outcome, and the loss of expert $f$ is the
$0/1$-loss: $\loss_t^f = |f(z_{t-m},\ldots,z_{t-1}) - z_t|$. (We
initialize $z_{1-m} = \ldots = z_{0} = 0$.) A uniform prior on this finite set of $2^{2^m}$ experts results in worst-case regret of order $\sqrt{T 2^m}$.
Then, if the data are
actually generated by an $m$-th order Markov chain with transition
probabilities $\Pr(z_t = 1 \mid (z_{t-m},\ldots,z_{t-1}) = \a) =
p_{\a}$, we have $f^*(\a) = \ind\{p_\a \geq \half\}$ and
\begin{align*}
\ex \sbrc*{(x_t^f)^2}{(z_{t-m},\ldots,z_{t-1}) = \a}
  &= 1,
  &
  \ex \sbrc*{x_t^f}{(z_{t-m},\ldots,z_{t-1}) = \a}
  &= 2|p_\a - \half|
\end{align*}
for any $f$ such that $f(\a) \neq f^*(\a)$. So the Bernstein
condition holds with $\kappa = 1$ and
$B = \frac{1}{2 \min_\a |p_\a - \half|}$.

\subsection{OCO: Hinge Loss on the Unit Ball}

Let $(x_1, y_1), (x_2,y_2),\ldots$ be classification data, with $y_t \in
\{-1,+1\}$, and consider the \emph{hinge loss} $\loss_t(u) = \max
\set*{0, 1- y_t \tuple{x_t, u}}$. Now suppose, for simplicity, that both
$x_t$ and $u$ come from the $d$-dimensional unit Euclidean ball, such
that $\ip{x_t}{u} \in [-1,+1]$ and the hinge is never active, i.e.\
$\loss_t(u) = 1 - y_t \tuple{x_t, u}$. Then, if the data turn out to be
i.i.d.\ observations from a fixed distribution $\pr$, the Bernstein
condition holds with $\kappa = 1$ (The proof can be found in Appendix~\ref{sec:pf5}):

\DeclareBoldMathCommand{\X}{X}
\DeclareBoldMathCommand{\Z}{Z}
\DeclareBoldMathCommand{\u}{u}
\DeclareBoldMathCommand{\vmu}{\mu}
\DeclareBoldMathCommand{\w}{w}
\DeclareBoldMathCommand{\e}{e}
\DeclareBoldMathCommand{\zeros}{0}
\newcommand{\lambdamax}{\lambda_\text{max}}

\begin{lemma}[Unregularized Hinge Loss Example]\label{lem:hingeloss}
  Consider the hinge loss setting above, where $|\ip{x_t}{u}| \leq 1$. If
  the data are i.i.d., then the $(B,\kappa)$-Bernstein condition is
  satisfied with $\kappa = 1$ and $B = \frac{2\lambdamax}{\|\vmu\|}$,
  where $\lambdamax$ is the maximum eigenvalue of $\ex\sbr*{\X \X^\top}$
  and $\vmu = \ex\sbr{Y\X}$, provided that $\|\vmu\| > 0$.

  In particular, if $\X_t$ is uniformly distributed on the sphere and
  $Y_t = \sign(\ip{\bar{\u}}{\X_t})$ is the noiseless classification of
  $\X_t$ according to the hyperplane with normal vector $\bar{\u}$, then $B
  \leq \frac{c}{\sqrt{d}}$ for some absolute constant $c > 0$.
\end{lemma}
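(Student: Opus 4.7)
The plan is to exploit the linearity of the loss inside the unit ball: since the hinge is never active, $\ell_t(u) = 1 - y_t \langle x_t, u\rangle$ and hence $\nabla \ell_t(u) = -y_t x_t$, which does not depend on $u$. This makes the OCO excess loss \eqref{eq:excess.loss} simplify to $x_t^u = -y_t \langle u - u^*, x_t\rangle$, and reduces both sides of the Bernstein inequality to pure moment quantities that can be written in terms of $\vmu = \ex[Y\X]$ and $\Sigma = \ex[\X \X^\top]$.

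First I would identify the optimum. Since $\ex[\ell_t(u)] = 1 - \langle u, \vmu\rangle$ and $u$ ranges over the unit ball, the minimizer is $u^* = \vmu/\|\vmu\|$ (this is where the hypothesis $\|\vmu\| > 0$ is used). Taking expectations then gives $\ex[x_t^u] = \langle u^* - u, \vmu\rangle = \|\vmu\| - \langle u, \vmu\rangle$. For the second moment, using $y_t^2 = 1$, $\ex[(x_t^u)^2] = (u - u^*)^\top \Sigma (u-u^*) \le \lambdamax \|u-u^*\|^2$.

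Next I would link these two quantities. Because both $u$ and $u^*$ lie on/in the unit ball,
\[
\|u - u^*\|^2 ~=~ \|u\|^2 - 2 \langle u, u^*\rangle + 1 ~\le~ 2 - 2 \langle u, u^*\rangle ~=~ \frac{2}{\|\vmu\|}\del*{\|\vmu\| - \langle u, \vmu\rangle} ~=~ \frac{2}{\|\vmu\|} \ex[x_t^u].
\]
Chaining this into the previous inequality yields $\ex[(x_t^u)^2] \le \frac{2\lambdamax}{\|\vmu\|} \ex[x_t^u]$, which is exactly the $(B,1)$-Bernstein condition with the claimed constant. The argument is conditional only on the i.i.d.\ assumption; no $\mathcal G_{t-1}$-conditioning complications arise.

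For the second part (uniform $\X$ on the sphere, noiseless halfspace labels), the two ingredients are essentially classical: by rotational symmetry $\ex[\X \X^\top] = \frac{1}{d} I$, so $\lambdamax = 1/d$; and $\|\vmu\| = \ex \sbr*{\abs{\ip{\bar{\u}}{\X}}}$ by the usual computation $\|\vmu\| = \langle \bar{\u}, \vmu\rangle = \ex[\sign(\ip{\bar{\u}}{\X}) \ip{\bar{\u}}{\X}]$. The single coordinate of a uniform point on $S^{d-1}$ has standard deviation $1/\sqrt{d}$, and its mean absolute value is of the same order, i.e.\ $\Omega(1/\sqrt d)$; substituting these two estimates into $B = 2\lambdamax/\|\vmu\|$ gives $B \le c/\sqrt{d}$. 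The only step that is not entirely routine is this final concentration fact about $\ex\sbr*{\abs{\ip{\bar{\u}}{\X}}}$, which I would either cite from a standard reference on concentration on the sphere or derive via a one-line Beta/Gaussian-projection computation.
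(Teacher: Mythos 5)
Your proof of the first part is essentially the paper's proof, reorganized slightly: you bound $\ex[(x_t^u)^2] \le \lambdamax\|u-u^*\|^2$ and separately relate $\|u-u^*\|^2$ to $\ex[x_t^u]$ via the ball constraint, whereas the paper chains these in a single display — the substance is identical. For the second part you correctly identify the two ingredients ($\lambdamax = 1/d$ and $\|\vmu\| = \Omega(1/\sqrt{d})$) and the Gaussian-projection route, which is exactly what the paper carries out; you leave the final concentration estimate $\ex[|\langle\bar{\u},\X\rangle|] = \Omega(1/\sqrt d)$ to a reference or short calculation, while the paper spells it out (writing $\X = \Z/\|\Z\|$ with $\Z \sim \normal(0,I)$ and lower-bounding on the event $\{|Z_1|\ge \tfrac12,\ \|\Z\|\le 2\sqrt d\}$). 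One small efficiency in your write-up: for lower-bounding $\|\vmu\|$ you can skip the paper's rotational-symmetry step entirely and just invoke Cauchy--Schwarz, $\|\vmu\|\ge\langle\bar{\u},\vmu\rangle = \ex[|\langle\bar{\u},\X\rangle|]$, since only a lower bound is needed.
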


The excluded case $\|\vmu\| = 0$ only happens in the degenerate case
that there is nothing to learn, because $\vmu = \zeros$ implies that the
expected hinge loss is $1$, its maximal value, for all $\u$.

\subsection{OCO: Absolute Loss}
Let $\U = [0,1]$ be the unit interval. Consider $\loss_t(u) =
\abs{u-x_t}$ where $x_t \in [0,1]$ are drawn i.i.d.\ from $\pr$. Let
$u^* \in \arg\min_u \ex \abs{u-x}$ minimize the expected loss. In this
case we may simplify $\tuple{w-u^*, \nabla \loss(w)} =
(w-u^*)\sign(w-x)$. To satisfy the Bernstein condition, we therefore
want $B$ such that, for all $w \in [0,1]$,
\[
\ex \sbr*{\del[\big]{(w - u^*) \sign \del{w-x}}^2}
~\le~
B
\ex \sbr*{(w - u^*) \sign \del{w-x}}^\kappa.
\]
That is,
\[
|w - u^*|^{2-\kappa}
~\le~
B 2^\kappa |\Pr(x \leq w) - \half|^\kappa.
\]
For instance, if the distribution of $x$ has a strictly positive density
$p(x) \geq m > 0$, then $u^*$ is the median and $|\Pr(x \leq w) - \half|
= |\Pr(x \leq w) - \Pr(x \leq u^*)| \geq m |w-u^*|$, so the condition
holds with $\kappa = 1$ and $B = \frac{1}{2m}$. Alternatively, for a
discrete distribution on two points $a$ and $b$ with probabilities $p$
and $1-p$, the condition holds with $\kappa = 1$ and $B =
\frac{1}{|2p-1|}$, provided that $p \neq \half$, as can be seen by
bounding $|w-u^*| \leq 1$ and $|\Pr(x \leq w) - \half| \geq |p-\half|$.

\section{Proof of Main Result}\label{sec:results}
This section builds up to prove our main result Theorem~\ref{thm:main}. 
We first introduce a handy abbreviation that allows us to reason simultaneously in expectation and with high probability. We then identify the minimal $\epsilon$ for which the Central Condition~\ref{con:Cen} holds. We then show how we can introduce a second-order adjustment, and characterize the cost. Combination with either worst-case regret bound then yields the desired result.

\subsection{Notation: Exponential Stochastic Negativity and Inequality}\label{sec:ESI.notation}
We introduce a convenient shorthand notation that we will use throughout this paper.

\begin{definition}\label{def:ESI}
A random variable $X$ is \emph{exponentially stochastically negative}, denoted $X \stochleq 0$, if $\ex \sbr{e^X} \le 1$. For any $\eta \ge 0$, we write $X \stochleq_\eta 0$ if $\eta X \stochleq 0$. For any pair of random variables $X$ and $Y$, we say that $X$ is \emph{exponentially stochastically less} than $Y$, denoted $X \stochleq Y$, if $X-Y \stochleq 0$.
\end{definition}

\begin{lemma}
\label{lem:ESI}\label{lem:mix}\label{lem:chainrule}
Exponential stochastic negativity has the following useful properties:
\begin{enumerate}
\item (Negativity). Let $X \stochleq 0$. As the notation suggests $X$ is negative in expectation and with high probability. That is $\ex\sbr*{X} \le 0$ and $\pr \set*{X \ge - \ln \delta} \le \delta$ for all $\delta > 0$.
\item (Convex combination). Let $\set*{X^f}_{f \in \mathcal F}$ be a family of random variables and let $w$ be a distribution on $\mathcal F$. If $X^f \stochleq 0$ for all $f$ then $\ex_{f \sim w} [X^f] \stochleq 0$.
\item (Chain rule). Let $X_1, X_2, \ldots$ be adapted to filtration $\mathcal G_1 \subseteq \mathcal G_2 \ldots$ (i.e.\ $X_t$ is $\mathcal G_t$-measurable for each $t$). If $X_t | \mathcal G_{t-1} \stochleq 0$  almost surely for all $t$, then $\sum_{t=1}^T X_t \stochleq 0$ for all $T \ge 0$.
\end{enumerate}
\end{lemma}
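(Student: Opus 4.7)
The plan is to prove each of the three properties by direct manipulation of the defining moment generating inequality $\ex[e^X] \le 1$, using Jensen's inequality, Markov's inequality, Fubini's theorem and the tower property of conditional expectation.

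For part 1 (Negativity), I would derive the expectation bound by a single application of Jensen's inequality: $e^{\ex[X]} \le \ex[e^X] \le 1$, hence $\ex[X] \le 0$. For the high-probability tail bound, I would apply Markov's inequality to the nonnegative random variable $e^X$: for any $\delta > 0$,
\[
\pr\{X \ge -\ln \delta\} = \pr\{e^X \ge 1/\delta\} \le \delta \cdot \ex[e^X] \le \delta.
\]

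For part 2 (Convex combination), the key observation is that $e^x$ is convex, so Jensen's inequality applied to the distribution $w$ gives $e^{\ex_{f\sim w}[X^f]} \le \ex_{f\sim w}[e^{X^f}]$. Taking expectation on both sides and swapping the order of integration (Fubini/Tonelli, valid since the integrands are nonnegative),
\[
\ex\bigl[e^{\ex_{f\sim w}[X^f]}\bigr] \le \ex\bigl[\ex_{f\sim w}[e^{X^f}]\bigr] = \ex_{f\sim w}\bigl[\ex[e^{X^f}]\bigr] \le \ex_{f\sim w}[1] = 1,
\]
which is exactly $\ex_{f\sim w}[X^f] \stochleq 0$.

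For part 3 (Chain rule), I would induct on $T$. The base $T=0$ is the trivial bound $\ex[e^0] = 1$. For the inductive step, condition on $\mathcal G_{T-1}$: since $\sum_{t=1}^{T-1} X_t$ is $\mathcal G_{T-1}$-measurable, the tower property yields
\[
\ex\bigl[e^{\sum_{t=1}^T X_t}\bigr] = \ex\bigl[e^{\sum_{t=1}^{T-1} X_t}\, \ex[e^{X_T} \mid \mathcal G_{T-1}]\bigr] \le \ex\bigl[e^{\sum_{t=1}^{T-1} X_t}\bigr] \le 1,
\]
where the first inequality uses the assumption $\ex[e^{X_T} \mid \mathcal G_{T-1}] \le 1$ a.s., and the second is the inductive hypothesis. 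None of the three parts presents a real obstacle; the only subtlety is making sure that the conditional ESI assumption in the chain rule is interpreted as $\ex[e^{X_t} \mid \mathcal G_{t-1}] \le 1$ almost surely, which is exactly what is needed to pull $e^{\sum_{t<T} X_t}$ out of the conditional expectation.
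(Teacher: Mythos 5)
Your proof matches the paper's almost line for line: Jensen plus Markov for negativity, Jensen plus exchanging the order of integration for the convex combination, and induction with the tower property for the chain rule. No differences worth noting.
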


\begin{proof}
\emph{Negativity}: 
By Jensen's inequality $\ex\sbr*{X} \le \ln \ex\sbr*{ e^X} \le 0$, whereas by Markov's inequality $\pr \set*{X \ge - \ln \delta} = \pr \set*{e^X \ge \frac{1}{\delta}} \le \delta \ex \sbr*{e^X} \le \delta$.
\emph{Convex combination}:
By Jensen's inequality
$
\ex \sbr*{e^{\ex_{f \sim w}[X^f]}}
\le
\ex_{f \sim w} \ex \sbr*{e^{X^f}}
\le
1
$.
\emph{Chain rule}:
By induction. The base case $T=0$ holds trivially. For $T > 0$ we have
$
\ex \sbr*{
e^{\sum_{t=1}^T X_t}
}
=
\ex \sbr*{
  e^{\sum_{t=1}^{T-1} X_t}
  \ex \sbrc*{
    e^{X_T}
  }{\mathcal G_{T-1}}
}
\le
\ex \sbr*{
  e^{\sum_{t=1}^{T-1} X_t}
}
\le
1
.
$
\end{proof}

\subsection{Normalized Cumulant Generating Function}

To prove our main result we will make use of the Central Condition~\ref{con:Cen}. For any distribution $\pr$ this condition will hold for some $\epsilon$ (which may be trivial). In this section we construct the smallest $\epsilon$ for which it holds and derive a few useful properties of that $\epsilon$.

\bigskip\noindent
Consider the family \eqref{eq:excess.loss} of excess loss variables $x_t^f$. We assume that $x_t^f \in [-1,1]$  is bounded in a range of width $2$ and has positive mean $\ex \sbrc{x_t^f}{\mathcal G_{t-1}} \ge 0$ by definition of $f^*$. As we will see, the complexity of our learning problem will be governed by the distribution of $x_t^f$. In particular, we will look at the \emph{normalized cumulant generating function} for $\eta \ge 0$:
\[
\epsilon_t^f(\eta) 
~\df~ 
\frac{1}{\eta} \ln \ex \sbrc*{e^{-\eta x_t^f}}{\mathcal G_{t-1}}
\]
By construction $-x_t^f \stochleq_\eta \epsilon_t^f(\eta)$. Boundedness of $x_t^f \in [-1,1]$ immediately results in $\epsilon_t^f(\eta) \in [-1,1]$. Moreover, Hoeffding's inequality (see e.g.\ \citet[Lemma~2.2]{CesaBianchiLugosi2006}) tells us that $\epsilon_t^f(\eta) \le \eta/2$
while Jensen's inequality gives $\epsilon_t^f(\eta) \ge - \ex \sbrc*{x_t^f}{\mathcal G_{t-1}}$. The dual representation $\epsilon_t^f(\eta) = \sup_Q - \ex_Q \sbr*{x} - \frac{1}{\eta} \KL\delcc[\big]{Q(x)}{\pr\delc{x_t^f}{\mathcal G_{t-1}}}$ reveals that $\epsilon_t^f(\eta)$ is increasing in $\eta$. The value at $\eta=0$ is obtained by continuity from $\epsilon_t^f(0) \df \lim_{\eta \to 0} \epsilon_t^f(\eta) = - \ex \sbrc{x_t^f}{\mathcal G_{t-1}} \le 0$.

To get a uniform control over the class $\mathcal F$, we will make use of the maximum 
\begin{equation}\label{eq:uniform.eta}
\epsilon_t(\eta)
~\df~
\sup_{f \in \mathcal F}~
\epsilon_t^f(\eta)
\qquad
\text{and}
\qquad
\epsilon(\eta)
~\df~
\sup_{t}~
\epsilon_t(\eta)
.
\end{equation}
The functions $\epsilon_t$ and $\epsilon$ inherit most properties of each $\epsilon_t^f$, but in addition since $f^* \in \mathcal F$ and $\epsilon_t^{f^*}(\eta) = 0$, we see that $\epsilon_t(\eta) \ge 0$ and also that $\epsilon_t(0) = 0$. Moreover, since $\epsilon_t(\eta) \le \eta/2$ we have $\lim_{\eta \to 0} \epsilon_t(\eta) = 0$. In this paper we will judge the complexity of the interplay of the generating distribution $\pr$ with the class $\mathcal F$ by how $\epsilon_t(\eta) \to 0$ as $\eta \to 0$. By construction the Central Condition~\ref{con:Cen} holds with $\epsilon(\eta)$.

\subsection{A Second-order Adjustment to Exponential Stochastic Inequality}

We now show a technical lemma showing that, roughly, the square of a bounded Central random variable is exponentially stochastically less than that variable itself. Consider any random variable $x \in [-1,1]$, and let us denote its normalized cumulant generating function by
$
\epsilon(\eta) = \frac{1}{\eta} \ln \ex \sbr*{e^{-\eta x}}
$.
(In particular, see Definition~\ref{def:ESI}, $- x \stochleq_\eta \epsilon(\eta)$ for all $\eta \ge 0$.)
Intuitively,  small $\epsilon(\eta) \ll \eta/2$ is special, indicating that $x$ cannot be often very negative. The following lemma shows that if $x$ is special to degree $\epsilon(\eta)$, then the smaller quantity $\approx x - \frac{\eta}{2} x^2$ is also special at only mildly weaker degree $\approx \epsilon(2 \eta)$.

\begin{lemma}\label{lem:squeezer}
For any random variable $x \in [-1,1]$ and any $\eta \ge 0$
\[
\frac{1}{\eta} \ln \ex \sbr*{e^{c \eta^2 x^2 - \eta x}} 
~\le~ 
\epsilon(2 \eta) + c \eta \epsilon(2 \eta)^2
\qquad
\text{where}
\qquad
c
~=~
\frac{1}{1 + \sqrt{1 + 4 \eta^2}}
.
\]
\end{lemma}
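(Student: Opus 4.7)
The goal is to establish an upper bound on $\ex[e^{c\eta^2 x^2 - \eta x}]$ in terms of $\ex[e^{-2\eta x}] = e^{2\eta\epsilon(2\eta)}$. I would attempt a pointwise linearization approach: find non-negative constants $A,B$ so that
\[
e^{c\eta^2 x^2 - \eta x} ~\le~ A + B\, e^{-2\eta x} \quad \text{for all } x \in [-1, 1],
\]
and then take expectation to conclude.

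Writing $\epsilon := \epsilon(2\eta)$ and $\mu := 2c\eta\epsilon$, a direct computation shows that the defining formula for $c$ squares to the identity $2c + 4c^2\eta^2 = 1$; in particular $2c\eta \le 1$ so $|\mu| \le 1$. I would choose
\[
A ~=~ \tfrac{1 - \mu}{2}\, e^{\eta\epsilon + c\eta^2\epsilon^2}, \qquad B ~=~ \tfrac{1 + \mu}{2}\, e^{-\eta\epsilon + c\eta^2\epsilon^2},
\]
both non-negative, rigged so that $A + B e^{2\eta\epsilon} = e^{\eta\epsilon + c\eta^2\epsilon^2}$ matches exactly the target bound on the right-hand side of the Lemma. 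Given the pointwise inequality, taking expectation yields $\ex[e^{c\eta^2 x^2 - \eta x}] \le A + B\,\ex[e^{-2\eta x}] = e^{\eta\epsilon + c\eta^2\epsilon^2}$, and taking $\ln$ and dividing by $\eta$ gives the Lemma.

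The bulk of the work is then the pointwise inequality. Via algebraic rearrangement (divide by $e^{c\eta^2\epsilon^2 - \eta x}$ and substitute $w = \eta(x+\epsilon)$, so that $w$ ranges over $[\eta(\epsilon - 1), \eta(\epsilon + 1)]$), it reduces to the cleaner form
\[
e^{cw^2 - \mu w} ~\le~ \cosh w - \mu \sinh w.
\]
Both sides equal $1$ at $w = 0$, their first derivatives both equal $-\mu$, and the second derivative of (RHS minus LHS) at $w = 0$ equals $1 - 2c - \mu^2$, which the identity $2c + 4c^2\eta^2 = 1$ rewrites as $4c^2\eta^2(1 - \epsilon^2) \ge 0$. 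Hence $w = 0$ is a local minimum of RHS $-$ LHS with value zero, confirming the inequality locally; the specific algebraic form of $c$ is precisely what aligns this second-order tangency.

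The main obstacle is promoting this local non-negativity to the full range of $w$. I would handle it by verifying non-negativity at the two endpoints $w = \eta(\epsilon \pm 1)$ directly (when $\epsilon = 0$ both collapse to $e^{c\eta^2} \le \cosh\eta$, provable by showing $\tanh\eta \ge \eta/\sqrt{1+4\eta^2}$ via $\sinh\eta \ge \eta$; the general $\epsilon$ case is a similar calculation using $2c + 4c^2\eta^2 = 1$), and then arguing via shape analysis---for instance, by rewriting $\cosh w - \mu\sinh w = \cosh(w - \mathrm{arctanh}\,\mu)/\sqrt{1-\mu^2}$ and comparing logarithms using standard Taylor bounds on $\ln\cosh$, or by directly analyzing higher derivatives of RHS $-$ LHS---that the difference has no interior zero beyond $w = 0$ on the relevant range. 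The essential subtlety is that the inequality is genuinely sharp at $w = 0$ and fails for any constant $c' > c$, so the specific form of $c$ must be used essentially.
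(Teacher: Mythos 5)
Your pointwise-linearization strategy is exactly what the paper does: the inequality $e^{c\eta^2 x^2-\eta x}\le A+Be^{-2\eta x}$ with your $A,B$ is precisely the paper's Lemma~\ref{lem:pointwise} specialized to $\gamma=2\eta$ (the term $\frac{e^{-\gamma(x+\epsilon)}-1}{\gamma}\eta(1+2c\eta\epsilon)e^{c\eta^2\epsilon^2+\eta\epsilon}$ expands to give $A+Be^{-2\eta x}$ after rearrangement), and taking expectation and dropping a non-positive remainder is the content of Theorem~\ref{thm:variance}. Your reduction to the compact form $e^{cw^2-\mu w}\le\cosh w-\mu\sinh w$ via $w=\eta(x+\epsilon)$ and $\mu=2c\eta\epsilon$ is a genuinely cleaner way to state the pointwise claim, and your use of the identity $2c+4c^2\eta^2=1$ to turn the second-order tangency at $w=0$ into $4c^2\eta^2(1-\epsilon^2)\ge 0$ is correct and illuminating. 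The endpoint check at $\epsilon=0$ also works as you describe.

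The problem is that the pointwise inequality is never actually established. You prove a local minimum of $\psi(w):=\ln(\cosh w-\mu\sinh w)-cw^2+\mu w$ at $w=0$ with $\psi(0)=\psi'(0)=0$, $\psi''(0)\ge 0$, and you plan to check the two endpoints and then ``argue via shape analysis that the difference has no interior zero,'' but that middle step is the whole lemma and it is not done. Note that $\psi''(w)=\operatorname{sech}^2(w-\operatorname{arctanh}\mu)-2c$ changes sign: $\psi$ is convex near $w_0=\operatorname{arctanh}\mu$ and concave beyond, so local convexity at $0$ plus endpoint non-negativity does not by itself rule out a dip somewhere in between. The Taylor-bound route on $\ln\cosh$ is also delicate for the same reason. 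The paper closes this by instead showing the derivative (in $x$) of the original pointwise expression changes sign exactly once, via an auxiliary function $h(x)=(1-2c\eta x)e^{c\eta^2 x^2+(\gamma-\eta)x}$; the quantity $h'(x)/e^{\cdots}$ is a concave quadratic in $x$, so nonnegativity on $[-1,1]$ reduces to checking $x=\pm 1$, which is exactly the constraint \eqref{eq:good.c} on $c$. You need an argument of that type (monotone derivative, or an explicit proof that $\psi'$ has a single zero on $[\eta(\epsilon-1),\eta(\epsilon+1)]$); until you supply it, the proof has a real gap.
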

In the notation of Section~\ref{sec:ESI.notation}, the lemma reads
\[
- x ~\stochleq_\eta~ \epsilon(\eta)
\quad
\text{for all $\eta \ge 0$}
\quad
\text{implies}
\quad
c \eta x^2 -  x
~\stochleq_\eta~
\epsilon(2 \eta) + c \eta \epsilon(2 \eta)^2
\quad
\text{for all $\eta \ge 0$}
.
\]

\begin{proof}
By Theorem~\ref{thm:variance} in Appendix~\ref{sec:esi.soa} with $\gamma = 2 \eta$ and the largest admissible $c$ for \eqref{eq:good.c}.
\end{proof}

Note that for $\eta = 0$ the lemma trivializes, telling us $- \ex \sbr*{x} \le \epsilon(0)$ where we have in fact equality. Note also that the right-hand side is an increasing function in $\epsilon(2 \eta)$ (the quadratic in $\epsilon(2 \eta)$ has positive derivative for all $\epsilon(2 \eta) \ge -\frac{1+\sqrt{4 \eta^2+1}}{2 \eta} < - 1$.)

\subsection{From Second-order Bound to Bound in Terms of Parameter of Distribution}
The next step toward fast rates is to obtain from a second-order bound, which involves the algorithm, another bound strictly in terms of the parameters of the distribution, which do not refer to the algorithm. The proof is in Appendix~\ref{sec:pf9}.

\begin{theorem}\label{thm:luckiness.regret.bd}
Consider either Squint in the Hedge setting or MetaGrad for OCO. Let  $\setc*{x_t^f}{f \in \mathcal F}$ be the associated the excess loss family from \eqref{eq:excess.loss}, and let $\epsilon(\eta)$ be, as in \eqref{eq:uniform.eta}, the corresponding maximal normalized cumulant generating function. For the Hedge setting let $K_T \df K_T^{f^*}$ as in \eqref{eq:squint.bd}, for OCO let $K_T$ be as in \eqref{eq:metagrad.bd}.
Then for each $\gamma \ge 0$ with $c$ as in Lemma~\ref{lem:squeezer},
\[
R_T^{f^*}
~\stochleq_\gamma~
\frac{K_T}{c \gamma}
+ T \epsilon(2 \gamma) (1 + c \gamma^2)
+ 2 K_T
.
\]
\end{theorem}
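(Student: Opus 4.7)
The plan is to reduce the bound to an exponential stochastic inequality on a single per-round scalar $r_t$ whose sum equals (or upper bounds) $R_T^{f^*}$ and whose squares sum to $V_T^{f^*}$, apply Lemma~\ref{lem:squeezer} conditionally on $\mathcal G_{t-1}$, accumulate across rounds via the chain rule, and then combine with the algorithm's deterministic second-order regret bound via a weighted AM-GM whose weight is chosen so that the two $V_T^{f^*}$ coefficients match up.

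The natural choice is $r_t := \ip{w_t}{\loss_t} - \loss_t^{k^*} = \ex_{k \sim w_t}[x_t^k]$ for Squint and $r_t := \ip{w_t - u^*}{\nabla \loss_t(w_t)} = x_t^{w_t}$ for MetaGrad. In both cases $|r_t| \le 1$ (using $DG = 1$ for OCO), $V_T^{f^*} = \sum_{t=1}^T r_t^2$, and $R_T^{f^*} \le \sum_{t=1}^T r_t$ (with equality in the Hedge case; for OCO this is the standard linear upper bound $\tilde R_T^{u^*}$). The next step is to certify the conditional central condition $-r_t \mid \mathcal G_{t-1} \stochleq_\eta \epsilon(\eta)$ for every $\eta \ge 0$. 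For MetaGrad this is immediate, since $w_t$ is $\mathcal G_{t-1}$-measurable and therefore $x_t^{w_t}$ lies in the family \eqref{eq:excess.loss} already controlled by the envelope $\epsilon$. For Squint, Jensen's inequality on the convex function $e^{-\eta x}$ together with $\mathcal G_{t-1}$-measurability of $w_t$ gives $\ex[e^{-\eta r_t} \mid \mathcal G_{t-1}] \le \sum_k w_t^k \, \ex[e^{-\eta x_t^k} \mid \mathcal G_{t-1}] \le e^{\eta \epsilon(\eta)}$.

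Now I invoke Lemma~\ref{lem:squeezer} on $r_t$ conditionally on $\mathcal G_{t-1}$ (its right-hand side is increasing in the envelope, so substituting the uniform $\epsilon$ is legal) to obtain $c\gamma r_t^2 - r_t \mid \mathcal G_{t-1} \stochleq_\gamma \epsilon(2\gamma) + c\gamma \epsilon(2\gamma)^2$. The chain rule (Lemma~\ref{lem:chainrule}) then accumulates this into
\[
c\gamma V_T^{f^*} - \sum_{t=1}^T r_t \stochleq_\gamma T\bigl(\epsilon(2\gamma) + c\gamma \epsilon(2\gamma)^2\bigr) \le T \epsilon(2\gamma)(1 + c\gamma^2),
\]
where the last inequality uses the Hoeffding bound $\epsilon(2\gamma) \le \gamma$.

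To close the argument I feed in the deterministic regret bound $\sum_{t=1}^T r_t \le \sqrt{V_T^{f^*} K_T} + K_T$ from \eqref{eq:squint.bd}/\eqref{eq:metagrad.bd}. Weighted AM-GM with weight $\alpha = c\gamma$ gives $\sqrt{V_T^{f^*} K_T} \le \tfrac{c\gamma}{2} V_T^{f^*} + \tfrac{K_T}{2 c\gamma}$; multiplying by two and rearranging produces the purely deterministic identity $\sum_t r_t \le \bigl(c\gamma V_T^{f^*} - \sum_t r_t\bigr) + \frac{K_T}{c\gamma} + 2 K_T$. Shifting the displayed ESI by the constant $\frac{K_T}{c\gamma} + 2 K_T$ and invoking ``$X \le Y$ a.s.\ and $Y \stochleq_\gamma Z$ imply $X \stochleq_\gamma Z$'' yields the desired ESI for $\sum_t r_t$, which transfers to $R_T^{f^*}$ by the same domination principle. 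The main obstacle to anticipate is this AM-GM calibration: choosing the weight $\alpha = c\gamma$ is what makes the coefficient of $V_T^{f^*}$ on the right-hand side of the deterministic bound exactly twice the coefficient $c\gamma$ appearing in the Step~3 ESI, so that after doubling one can eliminate $V_T^{f^*}$ in favour of a self-term $-\sum_t r_t$ and extract a non-recursive bound on $R_T^{f^*}$.
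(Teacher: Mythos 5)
Your proof is correct and follows essentially the same route as the paper: define the per-round scalar $r_t$ (which the paper calls $x_t$), establish the conditional central condition via Jensen (Hedge) or directly (OCO), apply Lemma~\ref{lem:squeezer} conditionally with the monotonicity remark to substitute the uniform envelope $\epsilon$, telescope via the chain rule, and combine with the second-order regret bound using AM-GM at weight $c\gamma$ to cancel the $V_T^{f^*}$ terms. The only cosmetic difference is that the paper phrases the AM-GM step as $2\sqrt{ab}=\inf_\eta \eta a + b/\eta$ evaluated at $\eta=c\gamma$, which is identical to your calibration.
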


To prove our main Theorem~\ref{thm:main} we invoke the Bernstein Condition~\ref{cond:Bern} to bound $\epsilon(2\epsilon)$ as a polynomial in $\gamma$, and then tune $\gamma$ to optimize the bound. The details of the proof can be found in Appendix~\ref{sec:pf3}.

\section{Conclusion}
We show that it is possible for online learning methods to provide both the safety and robustness of a worst-case regret bound and be adaptive to favorable stochastic environments. We focus on Squint and MetaGrad, methods for online learning with individual sequence regret guarantees of a particular second order form. We show that such guarantees imply automatic adaptivity to the Bernstein parameters of stochastic environments, and result in the corresponding fast regret rates.

\DeclareRobustCommand{\VAN}[3]{#3} %

\bibliography{../colt/bib}

\DeclareRobustCommand{\VAN}[3]{#2} %

\cleardoublepage
\appendix

\section{Second-order Adjustment of Exponential Stochastic Inequality}\label{sec:esi.soa}

In this section we prove a stronger form of Lemma~\ref{lem:squeezer}. We would like to remark that our solution to this problem was inspired by the general moments problem studied by \citet[Section 3]{NIPS2014_5434}, especially because this connection became invisible during the simplification of our proofs.

We will be thinking about two learning rates, $0 \le \eta \le \gamma$. The larger one, $\gamma$, will be where we evaluate $\epsilon(\gamma)$. So $\gamma$ controls the strength of the assumption. The smaller one, $\eta$, will be the learning rate at which we obtain the conclusion. The point is to get a large amount of quadratic $x^2$ in the conclusion, as governed by the constant $c$. Obviously, the more greedy we are in $\eta$ and $\gamma$, the smaller the $c$ for which we can get any traction. This trade-off is captured by the following relationship between $\gamma$, $\eta$ and $c$ that we will make use of throughout this section.

\begin{equation}\label{eq:good.c}
0
~\le~
c
~\le~
\frac{\sqrt{2 \abs{2 \eta - \gamma} +\gamma^2+1}-\abs{2 \eta - \gamma} -1}{4 \eta^2}
\end{equation}
Positivity of $c$ is not that important, as the desired inequality is trivial for $c \le 0$. The following inequality is useful later.

\begin{lemma}\label{lem:derv.sign}
Let $0 \le \eta \le \gamma$ and $c$ satisfy \eqref{eq:good.c}. Then
\[
1  ~\ge~ 2 c \eta
\]
\end{lemma}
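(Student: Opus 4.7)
The plan is to unwind the hypothesis directly: since $c$ is upper-bounded by the given expression, it suffices to show
\[
2\eta \cdot \frac{\sqrt{2|2\eta-\gamma|+\gamma^2+1}-|2\eta-\gamma|-1}{4\eta^2} ~\le~ 1,
\]
which after clearing denominators is equivalent to
\[
\sqrt{2|2\eta-\gamma|+\gamma^2+1} ~\le~ |2\eta-\gamma| + 2\eta + 1.
\]
(The $\eta = 0$ case is handled separately or by a limiting argument, since both sides of $1 \ge 2c\eta$ are trivial there; for $\eta > 0$ we get the strict inequality above.)

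Next I would abbreviate $a = |2\eta-\gamma| \ge 0$, note that both sides of the inequality are non-negative, and square. After squaring, the inequality reduces to
\[
\gamma^2 - (2\eta + a)^2 ~\le~ 4\eta.
\]
This is where the absolute value does its work, handled by a two-line case split on the sign of $2\eta - \gamma$. If $2\eta \ge \gamma$, then $a = 2\eta - \gamma$ and $2\eta + a = 4\eta - \gamma \ge \gamma$, so $\gamma^2 - (2\eta+a)^2 \le 0 \le 4\eta$. If $2\eta < \gamma$, then $a = \gamma - 2\eta$ and $2\eta + a = \gamma$, so the left-hand side is exactly $0 \le 4\eta$. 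Either way the inequality holds.

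There is no real obstacle: the main step is purely algebraic, with the only mild subtlety being the case split required to handle $|2\eta-\gamma|$. Monotonicity of the square root on the nonnegatives (both sides being nonnegative for $\eta \ge 0$) justifies the squaring step, and the condition $0 \le \eta \le \gamma$ plays no role beyond ensuring the hypothesis is meaningful; the inequality in fact holds for all $\eta, \gamma \ge 0$.
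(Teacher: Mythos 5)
Your proof is correct and follows essentially the same route as the paper's: both clear denominators, square the resulting inequality, and verify the polynomial inequality that results. The only cosmetic difference is that the paper groups terms so that positivity follows from $(2\eta-\gamma) + |2\eta-\gamma| \ge 0$ in one stroke, whereas you arrive at the equivalent form $\gamma^2 - (|2\eta-\gamma|+2\eta)^2 \le 4\eta$ and dispatch it via an explicit two-case split on the sign of $2\eta-\gamma$; your separate remark about the $\eta=0$ case is a small extra care the paper leaves implicit.
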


\begin{proof}
We need to show
\[
2 \eta
~\ge~
\sqrt{2 \abs{2 \eta - \gamma} +\gamma^2+1}-\abs{2 \eta - \gamma} -1
\]
that is
\[
\del*{
  2 \eta
  + \abs{2 \eta - \gamma}
  + 1
}^2
~\ge~
2 \abs{2 \eta - \gamma} +\gamma^2+1
\]
Expanding the left-hand side square results in
\[
4 \eta^2 + 4 \eta \abs{2 \eta - \gamma} + 4 \eta
+ \abs{2 \eta - \gamma}^2 + 2 \abs{2 \eta - \gamma} + 1
~=~
4 \eta (2\eta - \gamma)
+ 4 \eta \abs{2 \eta - \gamma}
+ 4 \eta
+ \gamma^2
+ 2 \abs{2 \eta - \gamma}
+ 1
\]
which definitely exceeds the right-hand side above.
\end{proof}

We now put our assumption to use. In the following Lemma we show that it implies a not-in-expectation-but-with-a-correction-term version of the result we are after.

\begin{lemma}\label{lem:pointwise}
Fix $0 \le \eta \le \gamma$ and let $c$ satisfy \eqref{eq:good.c}. Then for each $x \in [-1,1]$ and $\epsilon \in [-1,1]$ we have
\[
e^{c \eta^2 x^2 - \eta x}
-
\frac{  e^{- \gamma (x + \epsilon)}-1}{\gamma} \eta (1 + 2 c \eta \epsilon) e^{c \eta^2 \epsilon^2 + \eta \epsilon}
~\le~
e^{c \eta^2 \epsilon^2 + \eta \epsilon}
.
\]
\end{lemma}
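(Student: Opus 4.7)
Proof plan. My approach is to recognize the right-hand side of Lemma~\ref{lem:pointwise} as a tangent-line-type upper bound on the left-hand side, fitted so that the two sides share both value and first derivative at $x = -\epsilon$. After dividing by the positive factor $e^{c\eta^2\epsilon^2 + \eta\epsilon}$ and introducing the shorthands $u := x + \epsilon$ and $\beta := \eta(1 + 2c\eta\epsilon)$, the claim reduces to showing
\[
D(u) ~:=~ 1 + \frac{\beta}{\gamma}\bigl(e^{-\gamma u} - 1\bigr) - e^{c\eta^2 u^2 - \beta u} ~\ge~ 0 \qquad \text{for all } u \in [\epsilon-1,\,\epsilon+1].
\]
Routine calculation gives $D(0) = D'(0) = 0$, so $D$ has a double zero at $u = 0$, and $D''(0) = \beta(\gamma - \beta) - 2c\eta^2$. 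The strength of the bound therefore hinges on the sign of $D''(0)$.

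The first main step is to check $D''(0) \ge 0$. Since $\beta$ lies in $[\eta(1-2c\eta),\,\eta(1+2c\eta)]$ as $\epsilon$ varies over $[-1,1]$, and $\beta \mapsto \beta(\gamma - \beta)$ is a concave quadratic, its minimum is taken at one endpoint. Squaring \eqref{eq:good.c} (which is legitimate once Lemma~\ref{lem:derv.sign} verifies both sides are non-negative) and case-splitting on whether $\gamma \ge 2\eta$ or $\eta \le \gamma \le 2\eta$, one finds the condition collapses in each case to precisely the required endpoint inequality $\beta(\gamma - \beta) \ge 2c\eta^2$, holding with equality at one endpoint and strictly at the other.

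The second and harder step is lifting local non-negativity at $u = 0$ to the full range. I would work with the logarithmic difference $L(u) := \log\bigl(1 + (\beta/\gamma)(e^{-\gamma u} - 1)\bigr) - c\eta^2 u^2 + \beta u$, which has the same sign as $D$ wherever the log argument is positive. A short calculation yields
\[
L''(u) ~=~ \frac{b\gamma^2(1-b)\, e^{-\gamma u}}{\bigl((1-b) + b e^{-\gamma u}\bigr)^2} - 2c\eta^2, \qquad b := \beta/\gamma,
\]
where the first term is a bell-shaped function of $v = e^{-\gamma u}$. Consequently $L''$ changes sign at most twice, so the convexity region of $L$ is a single interval $[u_1, u_2]$ containing $0$. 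On this interval $L \ge 0$ is immediate from convexity combined with the double zero; on each concave complement, $L \ge 0$ follows (by concavity applied to the two endpoints of the concave piece) provided the boundary values $L(\epsilon-1)$ and $L(\epsilon+1)$ are non-negative.

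The main obstacle is therefore verifying these two boundary inequalities $D(\epsilon \pm 1) \ge 0$, equivalently the original inequality at $x = \pm 1$. Condition \eqref{eq:good.c} appears to be engineered precisely so that these endpoint cases close: the same algebraic manipulations that handle $D''(0) \ge 0$ in the two regimes of $\gamma$ versus $2\eta$ should, after plugging in $u = \epsilon \pm 1$ and using the exact form of \eqref{eq:good.c}, yield the required sign. A cleaner alternative—suggested by the authors' reference to the general moments problem of \citet{NIPS2014_5434}—would be to express $D(u)$ directly as a manifestly non-negative combination of known positive quantities, bypassing the endpoint case analysis altogether; this is the route I would pursue if the direct algebraic verification turns out to be too unwieldy.
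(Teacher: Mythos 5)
Your approach is genuinely different from the paper's, and it has a real gap that you yourself flag but do not close. The paper proves the lemma by showing that the left-hand side (as a function of $x$) attains its unique global maximum at $x = -\epsilon$: its derivative factors as $e^{-\gamma x}\eta\bigl(h(-\epsilon) - h(x)\bigr)$ with $h(x) = (1 - 2c\eta x)e^{c\eta^2 x^2 + (\gamma - \eta)x}$, and condition \eqref{eq:good.c} is engineered to make $h$ increasing on $[-1,1]$ (via $h' \ge 0$, checked at $x = \pm 1$ using concavity of the prefactor). Once $h$ is increasing, the derivative changes sign from positive to negative exactly at $x=-\epsilon$, so the maximum there is global and the lemma follows immediately, with no need to examine the endpoints $x = \pm 1$ of the original inequality at all. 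In your notation, this is the statement that $D(u)$ is decreasing-then-increasing with its unique minimum at $u = 0$, which is a strictly weaker structural property than the convexity/concavity decomposition you aim for, and it suffices.

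Your route instead analyzes $L''$ to split $[\epsilon-1,\epsilon+1]$ into a central convex piece and two concave flanks, and then must verify $D(\epsilon \pm 1) \ge 0$ to control the flanks. These two boundary inequalities are precisely the original lemma at $x = \pm 1$ — a nontrivial piece of the statement you are trying to prove — and you leave them unverified, conceding they might be ``too unwieldy.'' That is the missing step, and it is substantial: there is no reason to expect the algebra that certifies $D''(0) \ge 0$ (which you correctly observe reduces to the same two linear-in-$c$ inequalities as the paper's $h'(\pm 1) \ge 0$) to also certify the endpoint values. There is also a secondary issue: the ``bell-shaped'' form of the first term of $L''$ requires $0 < b = \beta/\gamma < 1$, but you establish only $\beta \le 2\eta$ (via Lemma~\ref{lem:derv.sign}), so for $\eta \le \gamma < 2\eta$ you could have $b \ge 1$ and the sign analysis of $L''$ breaks down. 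The paper's monotone-$h$ argument sidesteps both problems, which is why it is the more economical choice here.
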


\begin{proof}
We will show that the left-hand side is maximized over $x \in [-1,1]$ at $x=-\epsilon$. First, its derivative equals
\[
e^{- \gamma x}
\eta
\del[\big]{
  h(-\epsilon)
  -
  h(x)
}
\qquad
\text{where}
\qquad
h(x) = (1 - 2 c \eta x) e^{c \eta^2 x^2 + (\gamma - \eta) x}
.
\]
This indeed equals zero at $x=-\epsilon$. To show that $x=-\epsilon$ is indeed a maximum and that there are no other maxima it suffices to show that $h(x)$ is increasing on $x \in [-1,1]$. We have
\[
h'(x)
~=~
\del*{
-4 c^2 \eta^3 x^2+2 c \eta x (2 \eta-\gamma)- 2 c \eta  +\gamma- \eta
} e^{c \eta^2 x^2 + (\gamma - \eta) x}
\]
As the term in parentheses is concave in $x$, it suffices to show that $h'(x) \ge 0$ for $x \in \set{-1,1}$, i.e.\
\[
-4 c^2 \eta^3 - 2 c \eta \abs{2 \eta-\gamma}- 2 c \eta  +\gamma- \eta
~\ge~
0
\]
Solving the quadratic in $c$, we see that this holds if \eqref{eq:good.c}, as required.
\end{proof}

Finally, we are ready for the general version of the claim.

\begin{theorem}\label{thm:variance}
Pick $0 \le \eta \le \gamma$ and $c$ satisfying \eqref{eq:good.c}. Let $\epsilon \in [-1,1]$. Then for any $x \in [-1,1]$ with $\ex e^{- \gamma x} \le e^{\gamma \epsilon}$ we have
\[
\ex e^{c \eta^2 x^2 - \eta x}
~\le~
e^{c \eta^2 \epsilon^2 + \eta \epsilon}
.
\]
\end{theorem}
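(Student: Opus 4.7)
The plan is to combine the pointwise estimate from Lemma~\ref{lem:pointwise} with the hypothesis $\ex e^{-\gamma x} \le e^{\gamma \epsilon}$ by a single expectation and a sign-analysis of the resulting correction term. This should be a short argument once the heavy lifting in Lemma~\ref{lem:pointwise} is taken as given.

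First, I would handle the trivial case $\gamma = 0$ (which forces $\eta = 0$ and makes both sides of the conclusion equal to $1$) and henceforth assume $\gamma > 0$. Then I would apply Lemma~\ref{lem:pointwise} pointwise: for every realisation of $x \in [-1,1]$,
\[
e^{c \eta^2 x^2 - \eta x} \;\le\; e^{c \eta^2 \epsilon^2 + \eta \epsilon} \;+\; \frac{e^{-\gamma(x+\epsilon)}-1}{\gamma}\,\eta(1 + 2c\eta\epsilon)\, e^{c \eta^2 \epsilon^2 + \eta \epsilon}.
\]
Taking expectations on both sides (linearity), the second term on the right becomes $\frac{\ex[e^{-\gamma(x+\epsilon)}] - 1}{\gamma}\,\eta(1 + 2c\eta\epsilon)\, e^{c \eta^2 \epsilon^2 + \eta \epsilon}$.

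The remaining task is to argue that this extra term is non-positive, so that it can be dropped. The numerator is non-positive by the hypothesis: $\ex[e^{-\gamma(x+\epsilon)}] = e^{-\gamma\epsilon}\,\ex[e^{-\gamma x}] \le e^{-\gamma\epsilon} \cdot e^{\gamma\epsilon} = 1$. The denominator $\gamma$ is positive, $\eta \ge 0$, and the exponential is positive, so the only sign to check is that of $1 + 2c\eta\epsilon$. Here I would invoke Lemma~\ref{lem:derv.sign}, which gives $2c\eta \le 1$; combined with $|\epsilon| \le 1$ this yields $|2c\eta\epsilon| \le 1$ and hence $1 + 2c\eta\epsilon \ge 0$. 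Assembling these sign facts, the correction term is $\le 0$ and can be dropped, leaving
\[
\ex e^{c\eta^2 x^2 - \eta x} \;\le\; e^{c\eta^2 \epsilon^2 + \eta \epsilon},
\]
as desired.

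There is no real obstacle here beyond bookkeeping: all the work lives inside Lemma~\ref{lem:pointwise} (the delicate maximisation in $x$ under the constraint \eqref{eq:good.c}) and Lemma~\ref{lem:derv.sign} (the $2c\eta \le 1$ bound). The only thing one has to be careful about is matching signs and the trivial $\gamma = 0$ boundary. The intuition is that the pointwise inequality in Lemma~\ref{lem:pointwise} is a first-order-in-$x$ tangent-type bound around $x = -\epsilon$ of $e^{c\eta^2 x^2 - \eta x}$ by an exponential in $-\gamma x$, and taking expectations converts the hypothesis on $\ex e^{-\gamma x}$ directly into a bound on $\ex e^{c\eta^2 x^2 - \eta x}$.
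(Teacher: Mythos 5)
Your proof is correct and takes essentially the same route as the paper: take expectations over the pointwise bound in Lemma~\ref{lem:pointwise}, then drop the correction term after observing that the hypothesis makes $\ex e^{-\gamma(x+\epsilon)} - 1 \le 0$ while Lemma~\ref{lem:derv.sign} makes $1 + 2c\eta\epsilon \ge 0$. The only difference is that you make the $\gamma = 0$ boundary case explicit, which the paper leaves implicit.
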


\begin{proof}
Taking expectation over Lemma~\ref{lem:pointwise}, we find
\[
\ex
e^{c \eta^2 x^2 - \eta x}
~\le~
e^{c \eta^2 \epsilon^2 + \eta \epsilon}
+
\frac{  \ex e^{- \gamma (x + \epsilon)}-1}{\gamma} \eta (1 + 2 c \eta \epsilon) e^{c \eta^2 \epsilon^2 + \eta \epsilon}
,
\]
and the claim follows by bounding the right-most term by $0$. (Note that the factor $1 + 2 c \eta \epsilon$ is positive by Lemma~\ref{lem:derv.sign}.)
\end{proof}

\section{Bernstein to Central}\label{appx:b2c}
An indexed family of random variables $\setc{x^f}{f \in \mathcal F}$ satisfies the $(B, \kappa)$-\emph{Bernstein condition} if
\[
\ex \sbr*{(x^f)^2} ~\le~ B \ex \sbr*{x^f}^\kappa \qquad \text{for all $f \in \mathcal F$}
\]
and it satisfies the $\epsilon$-\emph{Central condition} if
\[
- x^f ~\stochleq_\eta~ \epsilon(\eta) \qquad \text{for all $f \in \mathcal F$ and $\eta \ge 0$}
\]
We now show that Bernstein implies Central. This is a special case of \cite[Theorem~5.4, Part~1]{EGMRW15}.
Assume Bernstein. Then by the Bernstein Sandwich \cite[Lemma~C.2]{llr}, simplifying $e^\eta-\eta-1 \le \eta^2$, which holds for small enough $\eta \le 1.79328$, and by the Bernstein assumption
\[
\epsilon^f(\eta)
~=~
\frac{1}{\eta} \ln \ex \sbr*{e^{-\eta x^f}}
~\le~
\eta \ex[(x^f)^2] - \ex[x^f]
~\le~
\eta B \ex[x^f]^\kappa - \ex[x^f]
\]
Then (the maximizer is found at $x= (B \eta  \kappa)^{\frac{1}{1-\kappa}}$ (which is $\le 1$ when $B \kappa \eta \le 1$, so for small enough $\eta$ this is a reasonable point))
\[
\epsilon(\eta)
~=~
\sup_f \epsilon^f(\eta)
~\le~
\sup_x \eta B x^\kappa - x
~=~
\frac{1-\kappa}{\kappa} (B \eta  \kappa)^{\frac{1}{1-\kappa}}
.
\]

\section{Proof of Theorem~\ref{thm:luckiness.regret.bd}}\label{sec:pf9}
\begin{proof}
For the Hedge setting, let us write $x_t \df \ex_{k \sim w_t} [x_t^k]$ for the excess loss of the learner in round $t$. Then $x_t \in [-1,1]$ and $-x_t \stochleq_\eta \epsilon(\eta)$ by Lemma~\ref{lem:mix}. Now by definition of $x_t^k$ in \eqref{eq:excess.loss} and $R_T^k$ and $V_T^k$ (see Section~\ref{sec:hedge})
\begin{align*}
\sum_{t=1}^T x_t 
&~=~ \sum_{t=1}^T \del*{\tuple{w_t, \loss_t} - \loss_t^{k^*}} ~=~ R_T^{k^*}
\quad\text{and}
\\
\sum_{t=1}^T (x_t)^2 
&~=~ \sum_{t=1}^T \del*{\tuple{w_t, \loss_t} - \loss_t^{k^*}}^2 ~=~ V_T^{k^*}
.
\end{align*}
For OCO, let us write $x_t \df x_t^{w_t}$ for the excess loss of the learner in round $t$. Again $x_t \in [-1,1]$ and we have $- x_t \stochleq_\eta \epsilon(\eta)$ by construction of $\epsilon(\eta)$. Moreover, from the definition of $\tilde R_T^u$ and $V_T^u$ from Section~\ref{sec:oco},
\begin{align*}
\sum_{t=1}^T x_t
&~=~
\sum_{t=1}^T \tuple{w_t - u^*, \nabla \loss_t(w_t)}
~=~
\tilde R_T^{u^*}
\quad
\text{and}
\\
\sum_{t=1}^T (x_t)^2
&~=~
\sum_{t=1}^T \tuple{w_t - u^*, \nabla \loss_t(w_t)}^2
~=~
V_T^{u^*}
.
\end{align*}
With this notation the second-order regret bounds \eqref{eq:squint.bd} and \eqref{eq:metagrad.bd} both state
\begin{equation}\label{eq:abstract.bd}
\sum_{t=1}^T x_t ~\le~ \sqrt{ \del*{\sum_{t=1}^T x_t^2} K_T} + K_T.
\end{equation}
Now fix $\gamma \ge 0$. For any $t$, as $- x_t \stochleq_\eta \epsilon(\eta)$, Lemma~\ref{lem:squeezer} gives
\[
c \gamma x_t^2 - x_t
~\stochleq_\gamma~
\epsilon(2 \gamma) (1 + c \gamma^2)
\]
By telescoping over rounds (using the chain rule Lemma~\ref{lem:chainrule}), we obtain
\begin{equation}\label{eq:midpoint}
c \gamma \sum_{t=1}^T x_t^2 - \sum_{t=1}^T x_t
~\stochleq_\gamma~
T \epsilon(2 \gamma) (1 + c \gamma^2)
.
\end{equation}
The individual sequence regret bound \eqref{eq:abstract.bd} gives us (since $2 \sqrt{a b} = \inf_\eta \eta a + b/\eta$) for every $\eta \ge 0$
\[
\sum_{t=1}^T x_t
~\le~
\frac{\eta}{2} \sum_{t=1}^T x_t^2
+ \frac{
  K_T
}{2 \eta}
+ K_T
.
\]
Plugging in $\eta = c \gamma$ (this implies $\eta \in [0,1/2]$ and $\gamma = \frac{2 \eta}{1 - 4 \eta^2}$) and combination with \eqref{eq:midpoint} results in
\[
\sum_{t=1}^T x_t
~\stochleq_\gamma~
\frac{K_T}{c \gamma}
+ T \epsilon(2 \gamma) (1 + c \gamma^2)
+ 2 K_T
.
\]
For the Hedge setting this proves the theorem. For OCO we finish with $R_T^{u^*} \le \tilde R_T^{u^*}$.
\end{proof}

\section{Proof of Lemma~\ref{lem:hingeloss}}\label{sec:pf5}
\begin{proof}
  Since, by assumption, $\u$ and $\X$ have length at most $1$, the hinge
  loss simplifies to $\loss(\u) = 1 - Y \ip{\u}{\X}$ with gradient $\nabla
  \loss(\u) = -Y \X$. This implies that
  \begin{equation}
    \u^* \df \argmin_\u \ex\sbr*{\loss(\u)} = \frac{\vmu}{\|\vmu\|},
  \end{equation}
  and
  \begin{align*}
    (\w - \u^*)^\top \ex &\sbr*{\nabla \loss(\w) \nabla \loss(\w)^\top}(\w - \u^*)
      = (\w - \u^*)^\top \ex \sbr*{\X \X^\top}(\w - \u^*)\\
      &\leq \lambdamax (\w - \u^*)^\top (\w - \u^*)
      \leq 2\lambdamax (1 - \ip{\w}{\u^*})\\
      &= \frac{2\lambdamax}{\|\vmu\|} (\w -\u^*)^\top (-\vmu)
      = \frac{2\lambdamax}{\|\vmu\|} (\w -\u^*)^\top \ex\sbr*{\nabla
      \loss(\w)},
  \end{align*}
  which proves the first part of the lemma

  For the second part, we first observe that $\lambdamax = 1/d$. Then,
  to compute $\|\vmu\|$, assume without loss of generality that
  $\|\bar{\u}\| = 1$, in which case $\bar{\u} = \u^*$. Now symmetry of
  the distribution of $\X$ conditional on $\ip{\X}{\u^*}$ gives
  \begin{multline*}
    \ex\sbr*{Y \X \mid \ip{\X}{\u^*}}
    \\
      = \sign(\ip{\X}{\u^*}) \ex\sbr*{\X \mid \ip{\X}{\u^*}}
      = \sign(\ip{\X}{\u^*}) \ip{\X}{\u^*} \u^*
      = |\ip{\X}{\u^*}| \u^*.
    \end{multline*}  
  By rotational symmetry, we may further assume without loss of
  generality that $\u^* = \e_1$ is the first unit vector in the standard
  basis, and therefore
  \begin{equation*}
   \|\vmu\| = \|\ex\sbr*{|\ip{\X}{\u^*}|} \u^*\| =  \ex\sbr*{|X_1|}.
  \end{equation*}
  If $\Z = (Z_1,\ldots,Z_d)$ is multivariate Gaussian $\normal(0,I)$.
  Then $\X = \Z/\|\Z\|$ is uniformly distributed on the sphere, so
  \begin{align*}
    \ex\sbr{|X_1|}
      = \ex\sbr*{\frac{|Z_1|}{\|\Z\|}}
      \geq \frac{1}{4 \sqrt{d}} \Pr\del*{|Z_1| \geq \half \wedge \|\Z\|
      \leq 2\sqrt{d}}.
  \end{align*}
  Since $\Pr\del*{|Z_1| < \half} \leq 0.4$ and $\Pr\del*{\|\Z\| \geq
  2\sqrt{d}} \leq \frac{1}{4d} \ex\sbr*{\|\Z\|^2} = \frac{1}{4}$, we
  have
  \begin{equation*}
    \Pr\del*{|Z_1| \geq \half \wedge \|\Z\|
        \leq 2\sqrt{d}}
    \geq 1 - 0.4 - \frac{1}{4} = 0.35,
  \end{equation*}
  from which the conclusion of the second part follows with $c =
  8/0.35$.
\end{proof}

\section{Proof of Theorem~\ref{thm:main}}\label{sec:pf3}

\begin{proof}
As pointed out below Condition~\ref{con:Cen}, the $(B, \kappa)$-Bernstein condition implies the central condition with $\epsilon(\eta) \le (\eta B)^{\frac{1}{1-\kappa}}$. 
By Theorem~\ref{thm:luckiness.regret.bd}, using $1/c \le 2 (1+ \gamma^2)$ and $c \le \frac{1}{2}$, we find that for all $\gamma \ge 0$
\begin{equation}\label{eq:stprt}
R_T^{f^*}
~\stochleq_\gamma~
(1+ \gamma^2) \frac{2 K_T}{\gamma}
+ (1 + \frac{1}{2} \gamma^2) \epsilon(2 \gamma)  T
+ 2 K_T
.
\end{equation}
By Lemma~\ref{lem:ESI} this implies for all $\gamma \ge 0$
\[
\ex [R_T^{f^*}]
~\le~
(1+ \gamma^2) \frac{2 K_T}{\gamma}
+ (1 + \frac{1}{2} \gamma^2) \epsilon(2 \gamma)  T
+ 2 K_T
.
\]
It remains to tune $\gamma$ to exploit the stochastic condition expressed by $\epsilon$. 
Reducing the above right-hand-side expression to its main terms and setting the derivative to zero suggests picking
\[
\hat \gamma 
~=~ 
\del*{\frac{2 K_T (1-\kappa) (2 B)^{-\frac{1}{1-\kappa}}}{T}}^{\frac{1-\kappa}{2-\kappa}}
~=~
O\del*{
  \del*{\wfrac{K_T}{T}}^{\frac{1-\kappa}{2-\kappa}}
}
.
\]
Plugging in this tuning, we find
\[
\ex[R_T^{f^*}]
~\le~
(2-\kappa)
\del*{4 K_T B}^{\frac{1}{2-\kappa}}
(T/{(1-\kappa)})^{\frac{1-\kappa}{2-\kappa}}
+ (5-\kappa) K_T
\]
Finally, using that $
(2-\kappa)
\del*{4 B}^{\frac{1}{2-\kappa}}
(1/{(1-\kappa)})^{\frac{1-\kappa}{2-\kappa}}$ is maximized in $\kappa$ at $\kappa = 1-\frac{1}{4 B}$ where it takes value $1+4 B$, we may simplify this to
\[
\ex [R_T^{f^*}]
~\le~
\del*{1 + 4 B}
(K_T/4)^{\frac{1}{2-\kappa}} T^{\frac{1-\kappa}{2-\kappa}}
+ (5-\kappa) K_T
~=~
O\del*{
  K_T^{\frac{1}{2-\kappa}} T^{\frac{1-\kappa}{2-\kappa}}
}
,
\]
which gives the first claim of the Theorem. Lemma~\ref{lem:ESI} applied to \eqref{eq:stprt} also implies that for all $\delta \ge 0$ with probability at least $1-\delta$
\[
R_T^{f^*}
~\le~
(1+ \gamma^2) \frac{2 K_T}{\gamma}
+ (1 + \frac{1}{2} \gamma^2) \epsilon(2 \gamma)  T
+ 2 K_T
+ \frac{-\ln \delta}{\gamma}
.
\]
Tuning $\gamma$ as before with $K_T$ replaced by $K_T + \frac{- \ln \delta}{2}$ yields the second claim.
\end{proof}

\section{Continuous Models}\label{app:cont}

We now consider Squint with models of predictors ${\cal F}$ that have
uncountably many elements so that in general $\pi^{f^*}= 0$, and each
$f \in {\cal F}$ is a function from ${\cal X}$ to ${\cal A}$, with
$\ell^f_t := \ell(y_t,f(x_t))$ for some fixed loss function
$\ell: {\cal Y} \times {\cal A} \rightarrow [0,1]$. This setting includes standard parametric models in classification and regression
but also countable unions thereof as well as nonparametric models. We
first present an extension of Theorem~\ref{thm:main} to this case; we
then give an illustration of this result with sup-norm metric entropy
numbers.

Squint can be straightforwardly applied to uncountable models, but now
the weight vector $w_t$ output by Squint at time $t$ takes the form of
a distribution on the set ${\cal F}$. For general distributions $u$ on
${\cal F}$, the loss $u$ incurs at time $t$ is now defined as
$\ell^u_t := \ex_{f \sim u} [\ell_t^f]$, so that the loss of Squint at
time $t$ is given by $\ell_t^{w_t}$, which generalizes the expression
$\langle w_t,\ell_t\rangle$ for the countable case. The regret of
Squint relative to an arbitrary $u$ is thus given by
$R_T^u = \ex_{f \sim u} \sum_{t=1}^T (\ell^{w_t}_t - \ell^f_t)$, and
the variance term in (\ref{eq:squint.bd}) generalizes to
$V_T^u = \ex_{f \sim u} \sum_{t=1}^T v^f_t$ with
$v^f_t = (\ell_t^{w_t}- \ell_t^f)^2$.

For such models we will use that, as shown by \citet{koolen2015blog},
Squint satisfies the following quantile or `KL' bound:
\begin{equation}\label{eq:assumed.bound}
R_T^u ~\le~ 2 \sqrt{V_T^u K^u_T } + K^u_T
\end{equation}
which holds for every distribution $u$ on ${\cal F}$ and prior $\pi$, 
where now $K^u_T = O(\KL \delcc{u}{\pi} + \ln \ln T)$ and $\KL \delcc{u}{\pi}$ is the KL divergence between prior $\pi$ and the distribution $u$. 

Note that (\ref{eq:assumed.bound})
generalizes the countable bound (\ref{eq:squint.bd}), which is
retrieved if $u$ is taken to be a point mass on $k$.

\begin{theorem}[Extension of Theorem~\ref{thm:main}]\label{thm:mainb}
  In any stochastic setting satisfying the $(B,\kappa$)-Bernstein
  Condition~\ref{cond:Bern}, the guarantee \eqref{eq:assumed.bound}
  for Squint implies fast rates for Squint in expectation (if there is sufficient prior mass on $f$ that behave similarly to  $f^*$ 
in expectation) and
  with high probability (if there is sufficient prior mass on $f$ taht are guaranteed to behave similarly to $f^*$ on all $x$). That is, for all $T$, for any sequence $u_1, u_2, \ldots$ of distributions on ${\cal F}$ and sequence of  numbers $C_1, C_2, \ldots$ that satisfy 
\begin{equation}\label{eq:hurry}
\ex \left[\sum_{t=1}^T \ell_t^{u_T} \right] \leq  
\ex \left[\sum_{t=1}^T \ell_t^{f^*}\right] + C_T,
\end{equation}
we have 
\begin{align*}
\ex[R_T^{f^*}] 
&~=~ 
O\del*{(K_T+C_T)^{\frac{1}{2-\kappa}}T^{\frac{1-\kappa}{2-\kappa}}}
,
  \intertext{and if (\ref{eq:hurry}) holds for {\em every\/} sequence $(x^T,y^T)$, then we also  have for any $\delta > 0$, with probability at least $1-\delta$,}
R_T^{f^*} 
&~=~
O\del*{(K_T + C_T - \ln \delta)^{\frac{1}{2-\kappa}}T^{\frac{1-\kappa}{2-\kappa}}}
\end{align*}
where  $K_T \df K_T^{u_T}$ from~\eqref{eq:assumed.bound}.
\end{theorem}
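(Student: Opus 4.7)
The proof mirrors that of Theorem~\ref{thm:main} (see Appendix~\ref{sec:pf3}) but needs two new ingredients that handle the passage from the single comparator $f^*$ to the deterministic distribution $u_T$. First, I would decompose
\[
R_T^{f^*} ~=~ R_T^{u_T} ~+~ \sum_{t=1}^T \ex_{f \sim u_T}[x_t^f],
\]
so that hypothesis~\eqref{eq:hurry} directly controls the second term by $C_T$, in expectation or (in the strengthened pointwise form) with probability one. For $R_T^{u_T}$ I would invoke \eqref{eq:assumed.bound} combined with the AM--GM inequality $2\sqrt{V_T^{u_T} K_T^{u_T}} \le \alpha V_T^{u_T} + K_T^{u_T}/\alpha$, reducing everything to controlling $V_T^{u_T} = \ex_{f\sim u_T}\sum_t (\ell_t^{w_t}-\ell_t^f)^2$.

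The central new step is the elementary bound $(\ell_t^{w_t}-\ell_t^f)^2 \le 2 x_t^2 + 2(x_t^f)^2$, where $x_t \df \ell_t^{w_t} - \ell_t^{f^*}$ is Squint's own excess loss relative to $f^*$. Both $\sum_t x_t^2$ and $\ex_{f\sim u_T}\sum_t(x_t^f)^2$ can then be ESI-bounded exactly as in the proof of Theorem~\ref{thm:luckiness.regret.bd}: applying Lemma~\ref{lem:squeezer} to each $x_t^f$ conditionally on $\mathcal G_{t-1}$, averaging over the deterministic distribution $u_T$ via Lemma~\ref{lem:mix}, and chaining over $t$ via Lemma~\ref{lem:chainrule} gives
\[
c\gamma \, \ex_{f \sim u_T}\sum_{t=1}^T (x_t^f)^2 ~-~ \sum_{t=1}^T \ex_{f \sim u_T}[x_t^f] ~\stochleq_\gamma~ T \epsilon(2\gamma)(1+c\gamma^2),
\]
and an analogous ESI expresses $c\gamma \sum_t x_t^2 - R_T^{f^*}$ as $\stochleq_\gamma T\epsilon(2\gamma)(1+c\gamma^2)$, since $x_t = \ex_{f\sim w_t}[x_t^f]$ and $w_t$ is $\mathcal G_{t-1}$-measurable.

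Passing to expectations via Lemma~\ref{lem:ESI}, combining these two ESIs with the $V_T^{u_T}$ decomposition, the $R_T^{f^*}$ decomposition, hypothesis~\eqref{eq:hurry}, and the KL bound leaves the self-referential inequality
\[
\ex[R_T^{f^*}] ~\le~ \tfrac{2\alpha}{c\gamma} \bigl( \ex[R_T^{f^*}] + C_T + 2T\epsilon(2\gamma)(1+c\gamma^2) \bigr) + \bigl(1+\tfrac{1}{\alpha}\bigr) K_T^{u_T} + C_T.
\]
Choosing $\alpha = c\gamma/4$ makes the coefficient of $\ex[R_T^{f^*}]$ on the right equal to $1/2$, which can be absorbed into the left-hand side. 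The high-probability statement is analogous: a union bound on the two ESIs picks up an extra additive $(-\ln(\delta/3))/\gamma$ in each, while the strengthened hypothesis applies pointwise. Finally, substituting $\epsilon(2\gamma) \le (2\gamma B)^{1/(1-\kappa)}$ from Appendix~\ref{appx:b2c} and tuning $\gamma$ exactly as in Appendix~\ref{sec:pf3}, but with $K_T^{u_T}$ replaced by $K_T^{u_T}+C_T$ (respectively $K_T^{u_T}+C_T - \ln \delta$), yields the advertised rates; the residual additive $C_T$ is subsumed into the leading term because $C_T \le T$ implies $C_T \le C_T^{1/(2-\kappa)} T^{(1-\kappa)/(2-\kappa)}$.

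The main obstacle is that $V_T^{u_T}$ entangles the learner's own outputs $\ell_t^{w_t}$ with the comparator losses $\ell_t^f$ averaged over $u_T$, whereas the central condition only controls the pure excess losses $x_t^f$. The $2x_t^2 + 2(x_t^f)^2$ split untangles them at the cost of a self-referential bound on $R_T^{f^*}$, which is resolved by the auxiliary parameter $\alpha$.
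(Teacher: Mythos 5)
Your proposal is correct and follows essentially the same route as the paper's proof in Appendix~\ref{app:cont}: the decomposition $R_T^{f^*} = R_T^{u_T} + \sum_t \ex_{f\sim u_T}[x_t^f]$ is precisely the paper's $C'_T := R_T^{f^*} - R_T^{u_T}$; the bound $(\loss_t^{w_t}-\loss_t^f)^2 \le 2 x_t^2 + 2(x_t^f)^2$ is the paper's elementary inequality $(a-b)^2/2 \le (a-c)^2 + (b-c)^2$; the two ESIs (one for $V_T^{f^*}$ against $R_T^{f^*}$, one for $E_T$ against $C'_T$) are the paper's (\ref{eq:first})--(\ref{eq:second}); and your choice $\alpha = c\gamma/4$ corresponds exactly to the paper's $\eta = c\gamma/2$ applied to $V_T^{f^*}+E_T$. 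The only cosmetic difference is in how the self-reference is resolved: you write the self-referential inequality explicitly and absorb $\tfrac12\ex[R_T^{f^*}]$ into the left side, whereas the paper folds the deterministic AM--GM bound and the ESI on $V_T^{f^*}$ together so that the $V_T^{f^*}$ terms cancel, then merges with the $E_T$ ESI via Lemma~\ref{lem:mix}. These yield the same (\ref{eq:hurryc})-type bound. One small point in your favor: for the $E_T$ ESI you explicitly apply Lemma~\ref{lem:squeezer} to each $x_t^f$ first and then average via Lemma~\ref{lem:mix}, which cleanly produces $\ex_{f\sim u_T}[(x_t^f)^2]$; the paper's text applies the squeezer to the averaged variable $\ex_{f\sim u_T}[x_t^f]$, which would give $(\ex_{f\sim u_T}[x_t^f])^2$ rather than what telescopes to $E_T$, so your ordering is the careful one. (Your $\delta/3$ in the union bound should be $\delta/2$ since there are two ESIs, but this only affects constants.)
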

While this theorem does allow us to use priors $u$ with uncountable
support, it is easiest to illustrate with priors with support on a
discretized version (countable subset) of ${\cal F}$ which may assign probability $0$ to $f^*$:
\newcommand{\model}{{\ensuremath{\cal F}}}
\newcommand{\cF}{{\ensuremath{\cal F}}}
\begin{example}{\rm 
Consider the classification setting where ${\cal J}$ is either finite or ${\mathbb N}$, and 
$\model = \bigcup_{j \in {\cal J}} \model_j$ is a finite or countable union of 
sub-models such that for $\delta > 0$,
$\ddot{{\cal F}}_{j,\delta} \subset \cF_j$ is a minimal $\delta$-cover of
$\ddot{{\cal F}}_j$ in the $\ell_{\infty}$-norm 
(that is, we require $\sup_{f \in \cF_j} \min_{\dot{f} \in \ddot{\cF}_{j,\delta}} \sup_{x \in {\cal X}, y \in {\cal Y}} 
\| \ell(y,f(x) - \ell(y,\dot{f}(x) \| \leq \delta$). 
Define $\Gamma := \{2^0, 2^{-1}, \ldots \}$. 
Assume that
for all $j$, $\mathcal{N}(\cF_j,\delta) := |\ddot{\cF}_{j,\delta}|  < \infty$ 
and note that $\log \mathcal{N}(\cF_j,\delta)$ is the metric entropy of $\cF_j$ in the sup norm at scale $\delta$. Let $\pi_{{\cal J}}$ be a probability mass function on ${\cal J }$ and let $\pi_{{\mathbb N}}$ be a probability distribution on ${\mathbb N}$ with $-\log \pi_{{\cal J}}(j)\pi_{{\mathbb N}}(k) = O (\log (j k)) $ 
 and let $\pi$ be the prior on
$\bigcup_{j \in {\mathbf N}, \delta \in \Gamma}
\ddot{F}_{j,\delta}$ with mass function $\pi$ given by, for
$f \in \ddot{F}_{j,2^{-k}}$,
$\pi(f) = \pi_{{\cal J}}(j)  \pi_{\mathbb N}(j) / \mathcal{N}(\cF_j,2^{-k}))$. Then 
Theorem~\ref{thm:mainb} gives the following bound in expectation (and mutatis m utandis in probability): 
$$
R_T^{f^*} 
~=~
O\left( 
\left( T 2^{-k} + \min_{j,k} \log (j k)  + \log \mathcal{N}(\cF_j,2^{-k})  \right)^{\frac{1}{2-\kappa}}T^{\frac{1-\kappa}{2-\kappa}} \right).
$$

Bounds in terms of models with bounded $\ell_{\infty}$-entropy numbers
were considered before by, e.g. \cite{gaillard2015chaining} with
bounded squared error loss. We note that, if ${\cal F}$ has
logarithmic entropy numbers (e.g. ${\cal F} = {\cal F}_1$ and
$\log \mathcal{N}(\cF_1,\epsilon) = O(- \log \epsilon)$, then, by
plugging in $k = \lceil \log_2 T \rceil$, we find that this cumulative
regret bound is of the form
$O ((\log T) \cdot T^{\frac{1- \kappa}{2- \kappa}})$, the standard
rate referred to in the discussion underneath
Theorem~\ref{thm:main}. In the case of larger (polynomial) entropy
numbers, our bounds are presumably suboptimal compared to the bounds
that can be obtained by ERM, since Squint is essentially a form of an
exponentially weighted forecaster that cannot exploit the chaining
technique, viz. the discussion by \citet{gaillard2015chaining},
\citet{audibert2009fast} and \citet{rakhlin2014online}. Nevertheless,
unlike ERM, Squint is robust and will continue to achieve nontrivial regret under
nonstochastic, adversarially generated data, even with polynomial
entropy numbers.

In practice, one may often work with ${\cal F}_i$ which have small (e.g. logarithmic) entropy numbers relative to the   pseudo-distance $d(f_1,f_2) = {\mathbb P}(f_1(X) \neq f_2(X))$ considered by e.g. \cite{Tsybakov04,Audibert04}, which may be much smaller than the $\ell_{\infty}$-numbers. In such cases, Theorem~\ref{thm:mainb} can still be used to give good bounds in expectation.
}\end{example}

\paragraph{Proof of Theorem~\ref{thm:mainb}}
Consider a (for now) arbitrary sequence $u_1, u_2, \ldots$, define
$K_T := K_T^{u_T}$ and $K'_T = K_T/4$, and
$$
C'_T = - (R_T^{u_T} -  R_T^{f^*})
\text{\ or equivalently\ } \sum_{t=1}^T \ell_t^{u_T} =  
\sum_{t=1}^T \ell_t^{f^*} + C'_T.
$$
One easily shows that for general $a,b,c \in {\mathbb R}$, one has
$(a-b)^2/2 \leq (a -c)^2 + (b-c)^2$. Applying the statement with $a
= \ell^{w_t}_t$, $b = \ell^{f}_t$ and $c= \ell^{f^*_t}$ gives $v^f_t
\leq 2 (v_t^{f^*} + (x_t^f)^2)$.  Summing over $t = 1..T$ and taking
expectation over $f \sim u_T$ now gives $V_T^{u_T} \leq 2 V_T^{f^*} +
2 E_T$ where $E_T = \ex_{f \sim u_T} \left[ \sum_{t=1}^T (x_t^f)^2\right]$.

Applying this to the bound \eqref{eq:assumed.bound} above at $u_T$, we get 
\begin{equation}\label{eq:hurryb}
R_T^{f^*}
~\le~
C'_T + 2 \sqrt{(V_T^{f^*}+ E_T) 2 K'_T} +K'_T 
~=~
\inf_\eta \set*{
  C'_T + \eta (V_T^{f^*}+E_T) + \frac{2 K'_T}{\eta} + K'_T
}.
\end{equation}
We first prove an analogue to Theorem~\ref{thm:luckiness.regret.bd}
for the uncountable setting, based on (\ref{eq:hurryb}).  As in that theorem, let, for given
${\cal F}$, $\setc*{x_t^f}{f \in \mathcal F}$ be the associated the
excess loss family from \eqref{eq:excess.loss}, and let
$\epsilon(\eta)$ be, as in \eqref{eq:uniform.eta}, the corresponding
maximal normalized cumulant generating function. Let $K'_T$ be as
above.  Fix $\gamma \ge 0$ and let $c$ be as in
Lemma~\ref{lem:squeezer}.  Now as in the proof of Theorem~\ref{thm:luckiness.regret.bd} we have for all $f \in {\cal F}$,  $x^f_t \in [-1,1]$ and  $- x^f_t \stochleq_\eta \epsilon(\eta)$ by construction of $\epsilon(\eta)$. Hence
 $- x_t^f \stochleq_{\gamma} \epsilon(\gamma)$ for all $f \in \mathcal F$, which  implies $- \ex_{f \sim w_t} x_t^f \stochleq_{\gamma} \epsilon(\gamma)$
and also $- \ex_{f \sim u_T} x_t^f \stochleq_{\gamma} \epsilon(\gamma)$
, and hence by Lemma~\ref{lem:squeezer} (see remark below the lemma), 
\begin{align} \label{eq:first}
c \gamma \ex_{f \sim w_t}[x_t^f]^2 -  \ex_{f\sim w_t}[x_t^f]
& \stochleq_{\gamma}  \epsilon(2 \gamma) (1 + c \gamma^2) \text{\ and\ } \\
\label{eq:second} c \gamma \ex_{f \sim u_T}[x_t^f]^2 - \ex_{f\sim u_T}[x_t^f]
& \stochleq_{\gamma} \epsilon(2 \gamma) (1 + c \gamma^2).
\end{align}
Using  $r_t^{f^*} = \ex_{f \sim w_t} \sbr*{x_t^f}$, again analogously to the proof of Theorem~\ref{thm:luckiness.regret.bd}, we may telescope (\ref{eq:first}) over rounds to
\begin{equation}\label{eq:secondenhalf}
c \gamma V_T^{f^*} - R_T^{f^*}
~\stochleq_{\gamma}
T \epsilon(2 \gamma) (1 + c \gamma^2)
\end{equation}
Now we use (\ref{eq:hurryb}) with  $\eta = \frac{c \gamma}{2}$, which implies 
$2 R_T^{f^*} \leq 2  C'_T + c \gamma (V^{f^*}_T + E_T) + 4 K'_T/ (c \gamma) + 2 K'_T$. Combining this with (\ref{eq:secondenhalf}), we find: 
\begin{equation}\label{eq:third}
U ~\stochleq~0 \text{\ with\ } U = \gamma R_T^{f^*}
- \left(
2 \gamma C'_T
+ c \gamma^2 E_T
+ \frac{4 K'_T}{c} + \gamma 2 K'_T
+ \gamma T \epsilon(2 \gamma) (1 + c \gamma^2) \right).
\end{equation}
Similarly to deriving (\ref{eq:secondenhalf}), using the definition of
$E_T$, we may telescope (\ref{eq:second}) over rounds to get
\begin{equation}\label{eq:fourth}
U'~\stochleq~0 \text{\ with \ } U' = c \gamma^2 E_T - \gamma C'_T
-
\gamma T \epsilon(2 \gamma) (1 + c \gamma^2). 
\end{equation}
We may now combine (\ref{eq:third}) and (\ref{eq:fourth}) using Lemma~\ref{lem:mix} with $w$ a distribution that puts mass $1/2$ on 
random variable $U$ and $1/2$ on $U'$, to get $(U + U')/2 \stochleq 0$, which can be rewritten to: 
\begin{multline} \nonumber
\frac{\gamma}{2} \Bigg( 
R_T^{f^*}
- (
2  C'_T
+ c \gamma^ E_T
+ \frac{4 K'_T}{c \gamma } + 2 K'_T 
+ T \epsilon(2 \gamma) (1 + c \gamma^2)) 
\\
+ c \gamma E_T - \gamma C'_T
-
\gamma  \epsilon(2 \gamma) (1 + c \gamma^2)
\Bigg)  ~\stochleq~0,
\end{multline}
and further to 
\begin{equation}\label{eq:hurryc}
\frac{1}{2}  R_T^{f^*} 
~\stochleq_{\gamma}~  \frac{3 }{2} C'_T + \frac{K_T}{c \gamma } + T \epsilon(2 \gamma) (1 + c \gamma^2) + 2 K'_T, 
\end{equation}
which is the required analogue of the statement of
Theorem~\ref{thm:luckiness.regret.bd}. Note that this statement holds
for {\em every\/} sequence $u_1, \ldots, u_T$, and $C'_T$ is a random
variable that depends on data $(x^T,y^T)$. 

The remainder of the proof of Theorem~\ref{thm:mainb} now follows in a fashion entirely analogous to the proof of Theorem~\ref{thm:main}, as in Appendix~\ref{sec:pf3}, where we use that we can bound $C'_T$ by $C_T$, either in expectation or on all sequences; we omit further details where one uses (\ref{eq:hurryc}) instead of the corresponding statement of Theorem~\ref{thm:luckiness.regret.bd}; we omit further details. 
\end{document}